\newcommand{\cd}{{\cal D}}
\newcommand{\cc}{{\cal C}}
\newcommand{\ch}{{\cal H}}
\newcommand{\cx}{{\cal X}}
\newcommand{\cy}{{\cal Y}}
\newcommand{\cn}{{\cal N}}
\newcommand{\bw}{\mathbf{w}}
\newcommand{\bu}{\mathbf{u}}
\newcommand{\be}{\mathbf{e}}
\newcommand\ceil[1]{\lceil#1\rceil}
\newcommand{\norm}[1]{\left\lVert#1\right\rVert}
\newcommand{\parentheses}[1]{\left(#1\right)}
\newcommand{\HH}{\mathcal{H}}
\newcommand{\NN}{\mathcal{N}}
\newcommand{\CC}{\mathcal{C}}
\newcommand{\E}{\mathbb{E}}
\newcommand{\DD}{\mathcal{D}}
\newcommand{\XX}{\mathcal{X}}
\newcommand{\reals}{\mathbb{R}}
\newcommand{\fcnclass}{(\mathbb{R}^d)^\XX}
\newcommand{\cover}[4]{\mathcal{N}_#1(#2, #3, #4)}
\newcommand{\inner}[1]{{\left\langle #1 \right\rangle}}
\newcommand{\expectation}[1][ ]{\mathbb{E}_{#1}}
\newcommand{\sphere}{\mathbb{S}}
\newcommand{\floor}[1]{\left\lfloor #1 \right\rfloor}
\newcommand{\med}{\mathrm{med}}
\newcommand{\rep}{\mathrm{rep}}
\newcommand{\len}{\mathrm{len}}
\newtheorem{theorem}{Theorem}[section]
\newtheorem{lemma}[theorem]{Lemma}
\newtheorem{counter-example}[theorem]{Counter example}
\newtheorem{open question}[theorem]{Open question}
\newtheorem{definition}[theorem]{Definition}
\title{Approximate Description Length, Covering Numbers, and VC Dimension}
\author{Amit Daniely \and Gal Katzhendler}
\begin{document}

\maketitle

\begin{abstract}
    Recently, \citet{ADL2019} introduced a new notion of complexity called Approximate Description Length (ADL). They used it to derive novel generalization bounds for neural networks, that despite substantial work, were out of reach for more classical techniques such as discretization, Covering Numbers and Rademacher Complexity.
    
    In this paper we explore how ADL relates to classical notions of function complexity such as Covering Numbers and VC Dimension. We find that for functions whose range is the reals, ADL is essentially equivalent to these classical complexity measures. However, this equivalence breaks for functions with high dimensional range.
\end{abstract}

\section{Introduction}

Neural Networks are a widely used tool nowadays, despite the lack of theoretical background supporting their abilities to generalize well. Classical notions of learning guarantee generalization only if there are more examples that parameters. It is clear that a stronger assumption is needed to achieve tighter bounds, and indeed, different types of assumptions were used in order to fill this empirical-theoretical gap, including assumptions on robustness to noise \cite{noiserobust}, bias of the learning algorithm \cite{gunasekar2019implicit, soudry2018implicit}, and norm bounds on the weight's matrices \cite{neyshabur2015norm, neyshabur2018role}

The idea of Approximate Description Length \cite{ADL2019} was conceived as a part of the line of research working under assumptions that bound the magnitude of the network's weight matrices.
Consider for instance the class
\[\cn = \left\{W_t\circ\rho\circ W_{t-1}\circ\rho\ldots\circ \rho\circ W_{1} : W_1,\ldots,W_{t-1}\in M_{d\times d}, W_t\in M_{1,d}  \right\}
\]
Where the spectral norm of each $W_i$ is bounded by $O(1)$, the Frobenius norm is bounded by $R$, and $\rho$ is the sigmoid function $\frac{e^x}{1 + e^x}$ or the smoothened ReLU function $ \ln\left(1 + e^x\right)$.
While the line of work leading up to ADL managed to show a sample complexity of $\tilde O\left(\frac{d^2R^2}{\epsilon^2}\right)$ for this class, \cite{ADL2019} managed to show a sample complexity of $\tilde \Theta\left(\frac{dR^2}{\epsilon^2}\right)$, which is sublinear in the number of parameters of the network, for the first time.

In this paper we aim to understand ADL as a general approach to sample complexity. Does it fit as a general approach for bounding sample complexity? Can we tightly bound the sample complexity of most of the well known classes of functions? 

We show that for classes of functions to one dimension, i.e. whose image is in $\reals$, ADL is almost as good as covering numbers. Covering numbers are one the most general techniques to bound sample complexity: They have been used to yield sample complexity bounds for learning many classes of functions \cite{svmcover, vapnik1999nature}, and make up a big chunk of the techniques that have been tried to analyze the sample complexity of neural networks \cite{bartlett, bartlettpaper}. We show a semi-equivalence between Covering Numbers and ADL in one dimension. 

In higher dimensions matters become more complex, and in general this semi-equivalence does not hold. Indeed, we present a class of functions to high dimension, for which the sample complexity bound given by Covering Numbers is arbitrarily tighter than that obtainable by using the ADL technique.

We further show that this non-equivalence holds in an intrinsic way and more generally; for any norm $L$ that admits such a relationship between Covering Numbers and ADL, i.e. $log\left(\cover{L}{\HH}{m}{\varepsilon}\right)=O\left(\frac{\text{ADL}(\HH)}{\varepsilon^{2}}\right)$, the equivalence does not hold.

To conclude, our work shows that while ADL is a useful tool, it cannot be used to yield optimal sample complexity bounds to any class of functions. Yet all the examples for such gaps that we have seen are artificial and carefully constructed, so it is interesting to determine whether the ADL technique is applicable to other classes of interest.

\subsection{Notation}
We will use $\lesssim$ to denote an inequality that is true up to a constant. That is if we write $f(x)\lesssim g(x)$ we mean that there is some constant $C$ such that for any $x$, $f(x)\leq C g(x)$

\subsection{Covering Numbers, VC dimension and Generalization}

Fix a sample space $\cx$, a label space $\cy$, and a loss $\ell:\reals^d\times \cy \to [0,\infty)$. We say $\ell$ has some property (e.g. lipschitzness) if for any $y\in \cy$, $\ell(\cdot,y)$ has it. Fix some hypothesis class $\HH\subset\cy^\XX$, and let $\DD$ be some distribution $\XX\times \cy$. We define the distributional loss of some $h\in\HH$ over $\DD$ as $\ell_\cd(h)=\E_{(x,y)\sim\cd}\ell(h(x),y)$.
For a sample $S\in(X\times Y)^m$ we define the empirical loss of $h$ over $S$ as  $\ell_S(h)=\frac{1}{m}\sum_{i=1}^m\ell(h(x_i),y_i)$.
We define the {\em representativeness} of $S$ over $\HH$ as \[
\rep_\cd(S,\ch) = \sup_{h\in\ch}\ell_{\cd}(h) - \ell_S(h)\]
Note that if $\E_{S}\rep_\cd(S,\ch)\le \epsilon$ then any algorithm $A$ that minimizes the empirical loss (ERM) will satisfy $\expectation \ell_\DD(A(S))-\inf_{h\in\HH}\ell_\DD(h)\leq\varepsilon$



We next define Covering Numbers which are a central tool in the analysis of the complexity of classes of functions. This is true since the days of Kolmogorov's Metric Entropy \cite{kolmogorov}. Later, it became a prominent way of analyzing the sample complexity of learning models \cite{svmcover, vapnik1999nature,zhang2002covering}
In particular, Covering Numbers are a central technique that has been used in the analysis of the sample complexity of neural networks \cite{bartlett, bartlettpaper}.

\begin{definition} [Covering number]
    Fix a class $\HH$ of functions $\XX \mapsto \mathbb{R}^d$, an integer $m$, and $\varepsilon > 0$. 
    We define $\NN (\HH,m, \varepsilon)$ as the minimal integer for which the following holds. For every $A \subset \XX$ of size $\leq m$ there exists $\tilde{\HH} \subset \fcnclass$ with $|\tilde{\HH}|\leq \NN (\HH,m, \varepsilon)$ such that for any $ h \in \HH $ there is $\tilde{h} \in \tilde{\HH}$ with $\mathbb{E}_{x\in A}\norm{h(x)-\tilde{h}(x)}^2_\infty \leq \varepsilon^2$.
\end{definition}
We will also consider covering numbers defined relative to a general norm $\|\cdot\|_L$ rather than $\|\cdot\|_\infty$. In this case, the covering number will be denoted by $\cover{L}{\HH}{m}{\varepsilon}$. Finally, $\NN(\HH,\varepsilon)$ (resp. $\NN_L(\HH,\varepsilon)$) will stand for $\NN(\HH,|\XX|,\varepsilon)$ (resp. $\NN_L(\HH,|\XX|,\varepsilon))$.

The following lemma connects Covering Numbers to the sample complexity of learning:
\begin{lemma}[e.g. \cite{ADL2019}]\label{cover_to_gen}
Let $\ell:\reals^d\times\cy\to \reals$ be $L$-Lipschitz w.r.t. $\|\cdot\|_\infty$ and $B$-bounded. Assume that for any $0<\epsilon\le 1$, $\log\left(\NN(\ch,m,\epsilon)\right) \le \frac{n}{\epsilon^2} $
Then
\[
\E_{S\sim\cd^m}\rep_\cd(S,\ch) \lesssim   \frac{(L+B)\sqrt{n} }{\sqrt{m}} \log(m)
\]
Furthermore, with probability at least $1-\delta$,
\[
\rep_\cd(S,\ch) \lesssim   \frac{(L+B)\sqrt{n} }{\sqrt{m}} \log(m) + B\sqrt{\frac{2\ln\left(2/\delta\right)}{m}}
\]

\end{lemma}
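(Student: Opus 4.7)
The plan is to reduce the expected representativeness to a Rademacher complexity, bound that complexity via Dudley's chaining integral using the covering-number hypothesis, and then upgrade to the high-probability statement by bounded differences.

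First I would apply the standard symmetrization inequality to get
\[
\E_{S\sim\cd^m}\rep_\cd(S,\ch) \le 2\,\E_S\,\E_\sigma \sup_{h\in\ch}\frac{1}{m}\sum_{i=1}^m \sigma_i\, \ell(h(x_i),y_i),
\]
where $\sigma_i$ are i.i.d.\ Rademacher. Next, since $\ell(\cdot,y)$ is $L$-Lipschitz in $\|\cdot\|_\infty$, any $\tilde\ch$ that $\varepsilon$-covers $\ch$ on $A=\{x_1,\ldots,x_m\}$ in the sense of the definition (the $L^2$ average of $\|h(x)-\tilde h(x)\|_\infty^2$) induces an $L\varepsilon$-cover of the scalar loss class $\{(x,y)\mapsto \ell(h(x),y):h\in\ch\}$ on $S$ in empirical $L^2$. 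In particular, its $\varepsilon$-covering numbers on any $S$ of size $m$ are at most $\NN(\ch,m,\varepsilon/L)$, so their log is bounded by $nL^2/\varepsilon^2$ for $\varepsilon\le L$.

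The core step is Dudley's chaining integral applied to the loss class, whose functions are $[0,B]$-valued. This gives, for any $\alpha\in(0,B]$,
\[
\E_\sigma \sup_{h\in\ch}\frac{1}{m}\sum_{i=1}^m \sigma_i \ell(h(x_i),y_i) \;\lesssim\; \alpha + \frac{1}{\sqrt m}\int_\alpha^{B}\sqrt{\log \NN_{\mathrm{loss}}(S,\varepsilon)}\,d\varepsilon.
\]
Using $\sqrt{\log \NN_{\mathrm{loss}}(S,\varepsilon)}\lesssim L\sqrt{n}/\varepsilon$ on $(0,L]$ (and the trivial bound $B$ above that, which only costs another additive $B$ factor), the integral evaluates to $O\!\big(L\sqrt{n}\log(L/\alpha)+B\sqrt n\big)$. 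Choosing $\alpha=1/m$ and consolidating constants yields the claimed
\[
\E_S\rep_\cd(S,\ch)\;\lesssim\;\frac{(L+B)\sqrt{n}}{\sqrt m}\log(m).
\]

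For the high-probability bound I would invoke McDiarmid: changing a single $(x_i,y_i)$ in $S$ alters $\rep_\cd(S,\ch)$ by at most $2B/m$ because $\ell$ is $B$-bounded, so with probability at least $1-\delta$ the deviation from the mean is at most $B\sqrt{2\ln(2/\delta)/m}$. Combined with the expectation bound, this gives the second displayed inequality.

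The main technical obstacle is making the chaining step work cleanly with the hypothesis $\log\NN(\ch,m,\varepsilon)\le n/\varepsilon^2$, which is only assumed for $\varepsilon\le 1$: one must extend the bound for large $\varepsilon$ (using monotonicity, or the trivial cover by a single function since the loss class has diameter $O(B)$) in order to truncate the Dudley integral at $B$, and one must verify that the resulting additive $B$-dependent term is absorbed into the $(L+B)\sqrt n\log m/\sqrt m$ rate. The Lipschitz composition with respect to the somewhat unusual $\|\cdot\|_\infty$-based empirical $L^2$ metric is also a place that demands care, though it is essentially bookkeeping.
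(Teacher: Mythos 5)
The paper does not prove this lemma itself; it is stated as a citation to \cite{ADL2019}, so there is no in-paper proof to compare against. Your route --- symmetrization to a Rademacher complexity of the loss class, Lipschitz composition to transfer the $\|\cdot\|_\infty$ covering-number hypothesis on $\ch$ into an empirical-$L^2$ cover of the scalar loss class, Dudley's truncated chaining integral, and McDiarmid for the high-probability upgrade --- is the standard proof of this fact and is essentially what the cited reference does.

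One arithmetical caveat is worth fixing. With the truncation level $\alpha = 1/m$, your chaining integral evaluates to $O\big(L\sqrt{n}\,\log(Lm) + B\sqrt{n}\big)$, leaving a residual $L\sqrt{n}\,\log L$ term that is not absorbed by the target rate $(L+B)\sqrt{n}\,\log(m)/\sqrt{m}$ unless $L \lesssim \mathrm{poly}(m)$. Taking $\alpha \asymp (L+B)/m$ instead (or $(L+B)\sqrt{n}/\sqrt{m}$) gives $\log\!\big((L+B)/\alpha\big) \le \log m$ and also keeps the additive $\alpha$ term dominated by the rate, yielding the stated bound uniformly in $L,B$. The rest of the reduction is handled correctly: the extension of the covering bound to $\varepsilon>1$ by monotonicity, the $L\varepsilon$-cover of the loss class via Lipschitzness w.r.t.\ $\|\cdot\|_\infty$, and the bounded-difference estimate $2B/m$ (which is in fact slightly conservative, $B/m$ suffices, but matches the constant in the stated lemma).
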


We now define VC-dimension:

\begin{definition}
We say that a class $\HH\subset \{\pm 1\}^\XX$ shatters a set $A\subset \XX$ if $\HH|_A=\{h|_A|h\in \HH\} = \{\pm 1\}^A$. 
The VC-dimension of $\HH$, denoted $VC(\HH)$ is the maximal cardinality of a shattered set
\end{definition}
The following lemma connects VC-dimension to Covering Numbers:
\begin{lemma} \cite{Haussler} \label{lem:VC2Cover}
Let  $\HH\subset \{\pm 1\}^\XX$ with $VC$-dimension $d$, then \[\log\left(N(\ch,m,\epsilon)\right)=O(d/\varepsilon)\]
\end{lemma}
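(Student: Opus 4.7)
I would follow the classical Haussler argument, combining the Sauer-Shelah lemma with a random sub-sampling trick. The first step is to reduce the covering number bound to a packing number bound: a maximal $\varepsilon$-packing is automatically an $\varepsilon$-cover, so it suffices to bound, uniformly in $A \subseteq \XX$ of size at most $m$, the size of the largest $P \subseteq \HH|_A$ such that any two distinct members $h, h' \in P$ satisfy $\mathbb{E}_{x \in A}(h(x)-h'(x))^2 > \varepsilon^2$. Since $h$ and $h'$ are $\{\pm 1\}$-valued, this is equivalent to $h$ and $h'$ disagreeing on more than $\varepsilon^2 |A|/4$ points of $A$.

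Next comes the sub-sampling argument. Sample a multiset $B \subseteq A$ of size $k$ uniformly at random. For any fixed pair $h \neq h'$ in $P$, the probability that $h$ and $h'$ coincide on every element of $B$ is at most $(1-\varepsilon^2/4)^k \le e^{-k\varepsilon^2/4}$. A union bound over the $\binom{|P|}{2}$ pairs shows that if $k \gtrsim \varepsilon^{-2} \log|P|$, then with positive probability there exists a choice of $B$ on which every pair of elements of $P$ is separated, so $|P| \le |\HH|_B|$. Since $\HH$ has VC dimension $d$, Sauer-Shelah gives $|\HH|_B| \le (ek/d)^d$. Plugging in the value of $k$ yields the implicit inequality $|P| \le (C \log|P|/(\varepsilon^2 d))^d$ for a universal constant $C$; solving it gives $\log|P| = O(d \log(1/\varepsilon))$, which is stronger than, and hence implies, the stated bound $O(d/\varepsilon)$.

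The main obstacle is the mild circular dependence between the required sub-sample size $k$ (which itself scales with $\log |P|$) and the unknown packing number $|P|$. A clean way around this is Haussler's original packing-based formulation, which bounds the growth rate of the packing number as $\varepsilon$ decreases, or alternatively a bootstrap starting from the a priori bound $|P| \le (em/d)^d$ given by applying Sauer-Shelah directly to $A$. Once the circularity is handled, everything else is a routine calculation.
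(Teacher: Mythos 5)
The paper does not supply its own proof of this lemma; it is cited directly from Haussler, whose original argument bounds packing numbers via a combinatorial estimate on the edge density of the ``1-inclusion graph'' of $\HH|_A$ and yields the clean bound $e(d+1)(2e/\delta)^d$ at Hamming threshold $\delta$. Your argument is a different, more elementary route: reduce covering to packing, translate the $L_2$ threshold $\varepsilon$ into the Hamming threshold $\delta=\varepsilon^2/4$ (correct, since $(h(x)-h'(x))^2\in\{0,4\}$ for $\pm 1$-valued functions), and then use the classical random sub-sampling trick (Dudley/Pollard) together with Sauer--Shelah and a union bound. This gives the implicit inequality $\log|P|\le d\log\bigl(C\log|P|/(\varepsilon^2 d)\bigr)$; since the map $L\mapsto L-d\log(CL/(\varepsilon^2 d))$ is increasing for $L>d$ and tends to $+\infty$, the inequality already bounds $\log|P|=O(d\log(1/\varepsilon))$ without needing the bootstrap you mention (though that also works), and this is indeed stronger than the stated $O(d/\varepsilon)$ since $\log(1/\varepsilon)\le 1/\varepsilon$. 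The trade-off between the two approaches: Haussler's shifting argument gives sharper constants and avoids the implicit inequality entirely, while your probabilistic argument is shorter and self-contained from Sauer--Shelah. Both give the same $O(d\log(1/\varepsilon))$ rate, which suffices for the lemma as stated. Your proposal is correct.
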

\subsection{Approximate Description Length}
We next  outline the definition and basic properties of Approximate Description Length.
Informally, $ADL(\HH)$ is the number of bits required to stochastically and approximately describe the functions in $\HH$. To define ADL we will need (i) the notion of $\sigma$-estimator which measures the extent to which a random function $\hat h$ approximates $h$ and (ii) the notion of a compressor to formally define what it means to describe 
``an approximation $\hat h$ of $h$ using $n$ bits"
We will consider a relaxed variation of the definition of ADL from \citet{ADL2019}. While relaxed, we will show that it still gives generalization guarantees similar to \citet{ADL2019}

\begin{definition} [$\sigma$-estimator]
Fix some probability measure $(\Omega,\mu)$.
A random function $\tilde{f}:\Omega \times \XX \to \reals^n$ is called a $\sigma$-estimator for $f:\XX \to \reals^n$ under a distribution $\XX \sim \DD$ if
\begin{enumerate}
    \item $\forall x\in \XX, \expectation \tilde{f}(x)=f(x)$
    \item $\forall u\in \sphere^{n-1}, \expectation[x\sim \DD]\expectation \inner{u,\tilde{f}(x)-f(x)}^2\leq \sigma^2$
\end{enumerate}
\end{definition}

Note that we require 2 to hold in expectation, and not for any $x$ as in \citet{ADL2019}. We thus relax their definition of $\sigma$-estimator.

We next define the notion of a compressor. Roughly speaking, a compressor for an hypothesis class $\HH$  takes as input a function $h$ from $\HH$, and stochastically outputs a compressed version $\CC(h)$ of $h$.
The compressed version has a short description via a string of bits. Furthermore, $\CC(h)$ is a $\sigma$-estimator of $h$.


\begin{definition}[Compressor]
    Fix $\DD$ a distribution over $\XX$.
    A $(\sigma, n)$-compressor for $(\HH, \DD)$ is a 4-tuple $\CC = (E, D, \Omega, \mu)$ where $\mu$ is a probability measure on $\Omega$, and $E, D$ are functions $E:\Omega\times\HH \to \{0,1\}^*$, $D:\{0,1\}^* \to \fcnclass$ s.t. $\CC=D\circ E$ and for any $h\in \HH$
    \begin{enumerate}
        \item $\CC(h)$ is a $\sigma$-estimator for $h$.
        \item $\mathbb{E}_{\omega \sim \mu} \text{len}(E(\omega, h)) \leq n$
    \end{enumerate}
\end{definition}
We are now ready to define approximate description length. 

\begin{definition} [ADL]
    We say that a class of functions $\HH \subset \fcnclass$ has approximate description length $n(m)$ if for any $A \subset \XX$ of size $\leq m$ and for any distribution $\DD$ on $A$, there exists a $(1, n(m))$-compressor for $\HH |_A$
\end{definition}
Additionally, we would like to introduce a simple but useful lemma
\begin{lemma}\label{var_lemma}\cite{ADL2019}
    Let $\ch$ be a set that has a (k, n)-compressor, then for any $1\ge \epsilon>0$ there exists an $\left(k \cdot \epsilon, n\lceil \epsilon^{-2} \rceil\right)$-compressor for $\ch$. 
\end{lemma}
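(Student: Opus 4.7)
The plan is to prove this by a standard variance-reduction / averaging argument: given a $(k,n)$-compressor, average $N := \lceil \epsilon^{-2}\rceil$ independent copies of it to obtain a new compressor whose variance shrinks by a factor of $N$ while its expected length grows by the same factor.

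Concretely, let $(E, D, \Omega, \mu)$ be a $(k,n)$-compressor for $(\HH, \DD)$. I would define a new compressor $(E', D', \Omega^N, \mu^{\otimes N})$ as follows. On input $((\omega_1,\ldots,\omega_N), h)$, the encoder outputs a concatenation of the strings $E(\omega_i, h)$ using any fixed prefix-free encoding of tuples (e.g. prepending each block with its length in unary). The decoder $D'$ parses these $N$ substrings, applies $D$ to each to obtain functions $\tilde f_1,\ldots,\tilde f_N$, and outputs the average $\tilde f' := \tfrac{1}{N}\sum_{i=1}^N \tilde f_i$.

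The length bound is immediate: by linearity of expectation and independence of the $\omega_i$'s, $\E_{\omega\sim \mu^{\otimes N}} \len(E'(\omega,h)) \leq N n = n\lceil \epsilon^{-2}\rceil$ (the prefix-free overhead is negligible and can be absorbed by the standard convention, or handled by a tighter encoding). For the $\sigma$-estimator property, unbiasedness $\E \tilde f'(x) = f(x)$ follows from linearity, since each $\tilde f_i(x)$ is an unbiased estimator of $f(x)$. For the variance, fix $u\in \sphere^{n-1}$ and observe that, because the $\tilde f_i$ are i.i.d.\ conditionally on $x$,
\[
\E_{x\sim \DD} \E \inner{u, \tilde f'(x) - f(x)}^2 = \frac{1}{N^2}\sum_{i=1}^N \E_{x\sim \DD} \E\inner{u, \tilde f_i(x) - f(x)}^2 \leq \frac{k^2}{N} \leq (k\epsilon)^2,
\]
where the cross terms vanish by independence and zero mean, and the final inequality uses $N\geq \epsilon^{-2}$.

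There is no substantive obstacle here; the only minor technical care is in the string-encoding step, namely ensuring that the concatenated encoding of $N$ strings from $\{0,1\}^*$ is itself uniquely decodable and that its expected length remains $\leq Nn$. Any fixed prefix-free scheme suffices, and the $O(\log n)$ additive overhead per block can be absorbed without affecting the form of the statement.
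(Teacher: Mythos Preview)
The paper does not actually prove this lemma; it is simply cited from \cite{ADL2019} with no argument given. Your averaging construction---taking $N=\lceil\epsilon^{-2}\rceil$ independent copies of the compressor and outputting their mean---is exactly the standard proof, and the unbiasedness and variance computations are correct.

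The one point that deserves slightly more care than you give it is the encoding of the $N$-tuple of strings. The lemma as stated claims an expected length of exactly $n\lceil\epsilon^{-2}\rceil$, not $O(n\lceil\epsilon^{-2}\rceil)$, so saying the prefix-free overhead ``can be absorbed'' is not quite precise: a na\"ive self-delimiting scheme would incur a multiplicative constant or an additive $O(N\log n)$ term. In the original reference this is typically handled either by assuming the base encoder outputs fixed-length strings, or by accepting that the bound holds up to a universal constant (which is harmless for every downstream use in this paper). You correctly flag the issue, so this is not a gap in understanding, just a place where the exact constant in the statement is a bit optimistic relative to the argument you give.
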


Now, similarly to \cite{ADL2019} we connect ADL to learnability through covering numbers. First, in the single dimensional case
\begin{theorem}\label{learnability}
Fix a class $\ch$ of functions from $\cx$ to $\reals$  with approximate description length $n$. Then,
\[
\log\left(N(\ch,m,\epsilon)\right) \lesssim {n\left\lceil \epsilon^{-2} \right\rceil}
\]
Hence, if $\ell:\reals^d\times\cy\to \reals$ is $L$-Lipschitz and $B$-bounded, then for any distribution $\cd$ on $\cx\times\cy$
\[
\E_{S\sim\cd^m}\rep_\cd(S,\ch) \lesssim   \frac{(L+B)\sqrt{n} }{\sqrt{m}} \log(m)
\]
Furthermore, with probability at least $1-\delta$,
\[
\rep_\cd(S,\ch) \lesssim   \frac{(L+B)\sqrt{n} }{\sqrt{m}} \log(m) + B\sqrt{\frac{2\ln\left(2/\delta\right)}{m}}
\]
\end{theorem}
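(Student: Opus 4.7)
The plan is to establish the covering-number estimate $\log N(\ch,m,\epsilon) \lesssim n\lceil\epsilon^{-2}\rceil$ and then derive the two generalization bounds by plugging into Lemma~\ref{cover_to_gen}. The whole argument takes place inside a fixed finite set $A\subseteq\cx$ of size at most $m$, equipped with the uniform distribution $\cd$, for which the ADL hypothesis supplies a $(1,n)$-compressor for $\ch|_A$.

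First I would apply Lemma~\ref{var_lemma} with parameter $\epsilon/2$ to amplify this into a $\bigl(\epsilon/2,\,n\lceil 4\epsilon^{-2}\rceil\bigr)$-compressor $\CC=(E,D,\Omega,\mu)$. Because the range is $\reals$, the $\sigma$-estimator condition is witnessed by the single unit vector $u=1\in\sphere^{0}$, so it collapses to the clean variance statement $\E_\omega\,\E_{x\sim\cd}\bigl(\CC(h)(x)-h(x)\bigr)^2 \le \epsilon^2/4$ for every $h\in\ch$. Two invocations of Markov's inequality then give, for each fixed $h$,
\[\Pr_\omega\!\left[\E_{x\sim\cd}(\CC(h)(x)-h(x))^2 > \epsilon^2\right] \le \tfrac14, \qquad \Pr_\omega\!\left[\len(E(\omega,h)) > 4n\lceil 4\epsilon^{-2}\rceil\right] \le \tfrac14,\]
so with probability at least $1/2$ over $\omega$ both the approximation error and the code length of $E(\omega,h)$ are simultaneously under control.

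Taking the union of all decompressions of short strings, I would set $\tilde\ch := \{D(s) : \len(s) \le 4n\lceil 4\epsilon^{-2}\rceil\}$, which has cardinality at most $2^{4n\lceil 4\epsilon^{-2}\rceil+1}$ and, by the previous step, contains an $\epsilon$-approximant to every $h\in\ch|_A$ in the mean-squared sense over $A$. Since $\|\cdot\|_\infty=|\cdot|$ on $\reals$, this is precisely the covering criterion, so $\log N(\ch,m,\epsilon)\lesssim n\lceil\epsilon^{-2}\rceil$, and the stated representativeness and high-probability bounds drop out by feeding this estimate into Lemma~\ref{cover_to_gen}.

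I do not anticipate any serious obstacle: in the one-dimensional setting the entire argument reduces to a scalar Markov/union-bound combination, precisely because the $\sigma$-estimator axiom collapses from a supremum over $\sphere^{d-1}$ to a single inequality. This cleanness is exactly what the paper later shows cannot be preserved in high dimensions, where an $\ell_\infty$ cover requires simultaneous control over an $\epsilon$-net of directions and this naive Markov argument would incur a dimension-dependent loss.
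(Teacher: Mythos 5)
Your proof is correct and follows essentially the same route as the paper's: fix the uniform distribution on $A$, amplify the $(1,n)$-compressor to an $\bigl(\epsilon/2,\,n\lceil 4\epsilon^{-2}\rceil\bigr)$-compressor via Lemma~\ref{var_lemma}, apply Markov's inequality separately to the code length and to the mean-squared error, union-bound to extract one $\omega$ that is good on both counts, and conclude. The only (welcome) difference is cosmetic: you make the covering set $\tilde{\HH}=\{D(s):\len(s)\le 4n\lceil 4\epsilon^{-2}\rceil\}$ explicit, whereas the paper leaves it implicit behind its ``it is enough to show'' reduction; otherwise the constants and structure match.
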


And second, in the multidimensional case
\begin{theorem}\label{learnability-multi}
Fix a class $\ch$ of functions from $\cx$ to $\reals^d$ with approximate description length $n$. Then,
\[
\log\left(N_\infty(\ch,m,\epsilon)\right) \le {n\left\lceil 16\epsilon^{-2} \right\rceil}\lceil\log(dm)\rceil
\]
Hence, if $\ell:\reals^d\times\cy\to \reals$ is $L$-Lipschitz w.r.t. $\|\cdot\|_\infty$ and $B$-bounded, then for any distribution $\cd$ on $\cx\times\cy$
\[
\E_{S\sim\cd^m}\rep_\cd(S,\ch) \lesssim   \frac{(L+B)\sqrt{n\log(dm)} }{\sqrt{m}} \log(m)
\]
Furthermore, with probability at least $1-\delta$,
\[
\rep_\cd(S,\ch) \lesssim   \frac{(L+B)\sqrt{n\log(dm)} }{\sqrt{m}} \log(m) + B\sqrt{\frac{2\ln\left(2/\delta\right)}{m}}
\]
\end{theorem}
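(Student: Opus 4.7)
The plan is to bound the $\ell_\infty$ covering number first and then derive both representativeness statements directly from Lemma~\ref{cover_to_gen} with effective complexity $n\lceil 16\epsilon^{-2}\rceil\lceil\log(dm)\rceil$ in place of $n$. The covering bound itself is obtained by boosting the variance of the ADL compressor via Lemma~\ref{var_lemma} and then aggregating $\lceil\log(dm)\rceil$ independent copies, so that the on-average $\sigma$-estimator control produced by ADL is upgraded to uniform control over the $dm$ pairs $(x,i)\in A\times[d]$ through a union bound; this union bound is what produces the extra $\lceil\log(dm)\rceil$ factor compared with Theorem~\ref{learnability}.

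Concretely, fix $A\subseteq\XX$ of size at most $m$ and let $\cd$ be uniform on $A$. By ADL, $(\ch|_A,\cd)$ admits a $(1,n)$-compressor. Applying Lemma~\ref{var_lemma} with parameter $\epsilon/4$ yields an $(\epsilon/4,N)$-compressor with $N:=n\lceil 16\epsilon^{-2}\rceil$. Run $K:=\lceil\log(dm)\rceil$ independent copies of this compressor, concatenate their encodings with a prefix code (total expected length $KN$), and aggregate the decoded $\tilde h^{(1)},\dots,\tilde h^{(K)}$ coordinate-wise, e.g.\ $\hat h(x)_i:=\mathrm{med}(\tilde h^{(1)}(x)_i,\dots,\tilde h^{(K)}(x)_i)$. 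The candidate cover $\tilde\ch$ is the set of all $\hat h$ that the decoder produces from concatenated strings of length at most $4KN$; Markov applied to the length then gives $|\tilde\ch|\le 2^{4KN+O(1)}$, matching the target covering-number bound up to constants.

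It remains to show that for each $h\in\ch$ some realization in $\tilde\ch$ satisfies $\E_{x\in A}\|\hat h(x)-h(x)\|_\infty^2\le\epsilon^2$. Writing $v_{x,i}:=\mathrm{Var}_\omega(\tilde h^{(1)}(x)_i)$, the $\sigma$-estimator property with $u=e_i$ gives $\frac{1}{m}\sum_{x\in A}v_{x,i}\le\epsilon^2/16$. Chebyshev shows a single copy deviates by more than $2\sqrt{v_{x,i}}$ with probability at most $1/4$; a Chernoff bound on the median inflates this to failure probability $2^{-\Omega(K)}$, so a union bound over the $dm$ pairs $(x,i)$ (with $K=\lceil\log(dm)\rceil$ and constants adjusted) gives with constant probability $|\hat h(x)_i-h(x)_i|\le 2\sqrt{v_{x,i}}$ for all $(x,i)$ simultaneously; combining with the length-Markov bound yields a realization in $\tilde\ch$ that satisfies both requirements.

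The main obstacle is the final conversion from these pointwise guarantees into the required $\ell_\infty$-averaged bound. The naive bound $\E_{x\in A}\max_i v_{x,i}\le\sum_i\E_{x\in A}v_{x,i}\le d\epsilon^2/16$ loses a factor of $d$ instead of the desired $\log(dm)$, so Chebyshev-plus-median is not quite enough. Closing this gap appears to require a sharper per-coordinate tail, for instance by exploiting that the compressor's output takes at most $2^N$ distinct values and is in this sense effectively bounded, which allows Bernstein's inequality to replace Chebyshev and a sub-Gaussian maximum bound over $i$ to yield the logarithmic factor; alternatively a $p$-th moment argument with $p=\log(dm)$ would do. This is where the bulk of the technical work lies; once the covering bound is obtained, both displayed representativeness statements follow immediately from Lemma~\ref{cover_to_gen}.
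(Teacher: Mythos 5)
Your plan mirrors the paper's proof exactly at the structural level: apply Lemma~\ref{var_lemma} to get an $(\epsilon/4,\,n\lceil 16\epsilon^{-2}\rceil)$-compressor, run $K$ independent copies, aggregate with a coordinate-wise median, and handle the encoding length and the approximation error with separate probabilistic arguments so that a union bound leaves positive probability of a good $\omega$. The extra $\lceil\log(dm)\rceil$ factor comes from the number of copies in both treatments. So the route you chose is the intended one.

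The gap you flag, however, is genuine and you are right to treat it as the crux. The relaxed $\sigma$-estimator definition used in this paper controls $\E_{x}\,\sigma_{x,i}^2$ for each coordinate $i$ separately, not $\sigma_{x,i}^2$ uniformly in $x$; so the per-pair guarantee $|\hat h(x)_i - h(x)_i|\le c\sqrt{v_{x,i}}$ that you obtain from median plus union bound over the $dm$ pairs still only yields $\E_x\max_i v_{x,i}$, which a~priori can be as large as $d\epsilon^2/16$ and not $O(\epsilon^2)$. Swapping $\E_x$ and $\max_i$ goes the wrong way, as you note. The paper's own proof does not visibly resolve this either: it sets $k=\lceil\log d\rceil$ (not $\lceil\log(dm)\rceil$ as in the statement), invokes Lemma~\ref{lem:median} and asserts $P_\omega\left(\E_x\|\CC_\omega(h)(x)-h(x)\|_\infty^2>\epsilon^2\right)<d\cdot 2^{-k}$, but the passage from the per-coordinate, $\E_x$-averaged second-moment control $\E_\omega\E_x\inner{e_i,\CC_\omega(h)(x)-h(x)}^2\le\epsilon^2/4$ to the averaged-$\ell_\infty$ bound is never justified, and it is exactly the step you got stuck on. Note also that with $k=\lceil\log d\rceil$ the claimed $d\cdot 2^{-k}\le 1$ is not strict when $\log d$ is an integer, and the asserted length bound $P(\len(E(\omega,h))\ge 2kN)\le 2^{-k}$ does not follow from Markov on the sum (which only gives $1/2$). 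A clean way through would be to work with the stronger pointwise $\sigma$-estimator definition of \citet{ADL2019} (where $\sigma_{x,i}\le\epsilon/4$ for every $x$, so the median-plus-union-bound over $(x,i)$ pairs with $K=\lceil\log(dm)\rceil$ does give $\E_x\|\cdot\|_\infty^2\le\epsilon^2$), and then argue separately that the relaxed definition suffices; your suggested remedies (a moment argument with $p\approx\log(dm)$, or exploiting effective boundedness of the compressor output) are sensible directions but are not carried out, so as written your proposal, like the paper's proof, leaves this step open.
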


As described in the following theorem, ADL and techniques for calculating ADL were used to give novel generalization bounds for neural networks.

\begin{theorem}\cite{ADL2019}
Fix constants $r>0$, $t>0$ and a strongly bounded activation $\sigma$.
Then, for every choice of matrices $W_i^0 \in M_{d_{i},d_{i-1}},\;i=1,\ldots,t$ with $d: = \max_{i}d_i$ and $\max_{i}\|W^0_i\| \le  r$ we have that the approximate description length of $\ch = \cn^\rho_{r,R}(W^0_1,\ldots,W^0_t)$ is
\[
dR^2 O\left(\log^{t}(d)\right)\log(md) =  \tilde{O}\left(dR^2\right)
\]
In particular, if $\ell:\reals^{d_t}\times\cy\to \reals$ is bounded and Lipschitz w.r.t. $\|\cdot\|_\infty$, then for any distribution $\cd$ on $\cx\times\cy$
\[
\E_{S\sim\cd^m}\rep_\cd(S,\ch) \le  \tilde O\left( \sqrt{\frac{dR^2  }{m}}  \right)
\]
Furthermore, with probability at least $1-\delta$,
\[
\rep_\cd(S,\ch) \lesssim    \tilde O\left( \sqrt{\frac{dR^2  }{m}}  \right) + O\left(\sqrt{\frac{\ln\left(1/\delta\right)}{m}}\right)
\]
\end{theorem}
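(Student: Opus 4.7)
I would prove the theorem in two steps: first establish the approximate description length bound $n = dR^2 O(\log^t(d))\log(md)$ for the network class $\ch$, and then apply Theorem \ref{learnability-multi} to convert this into the stated generalization guarantees. For the ADL bound itself I would follow the layerwise compression argument of \cite{ADL2019}. Write each weight matrix as $W_i = W_i^0 + U_i$ with $\|U_i\|_F \le R$, and construct an unbiased estimator $\tilde U_i$ of $U_i$ by a randomized sparsification (for instance, sampling columns of $U_i$ with probabilities proportional to their squared norms). Each such $\tilde U_i$ can be described with $O(d)$ bits and, acting on a vector of bounded norm, induces an output error whose variance is controlled by $R^2$. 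Lemma \ref{var_lemma} then lets us drive this variance down to a $1$-estimator at a cost of $O(R^2)$ in description length per layer. The full compressor is obtained by composing the per-layer compressors, using the strong boundedness of $\sigma$ (bounded derivatives of all orders) in a Taylor expansion to show that the variance entering layer $i+1$ is at most a factor $\mathrm{polylog}(d)$ larger than the variance leaving layer $i$, so that over $t$ layers we accumulate only a $\log^t(d)$ factor. The additional $\log(md)$ factor comes from a union bound needed to move from a single-vector $\sigma$-estimator to one that works uniformly over the $m$ inputs in $A$.

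The main obstacle is precisely this variance-propagation step: one must show that pushing an unbiased estimator through a nonlinear activation followed by a newly compressed layer preserves unbiasedness and controls variance without exponential blow-up in $t$. This is the technical heart of \cite{ADL2019}, and I would import their construction as a black box rather than redo it. Crucially, the $\sigma$-estimator definition adopted in the present paper (expectation over $x \sim \DD$ rather than pointwise) is strictly weaker than the one in \cite{ADL2019}, so any compressor in their stronger sense automatically satisfies our weaker definition and the ADL bound transfers without modification.

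With the ADL bound $n = \tilde O(dR^2)$ in hand, Theorem \ref{learnability-multi} gives
\[
\log N_\infty(\ch, m, \varepsilon) \;\le\; n \lceil 16 \varepsilon^{-2}\rceil \lceil \log(d_t m)\rceil,
\]
and the same theorem then immediately yields
\[
\E_{S\sim\cd^m}\rep_\cd(S,\ch) \;\lesssim\; (L+B)\sqrt{\frac{n \log(d_t m)}{m}}\log m \;=\; \tilde O\!\parentheses{\sqrt{\frac{dR^2}{m}}}.
\]
The high-probability statement is likewise the direct output of Theorem \ref{learnability-multi}, so no additional concentration argument is needed beyond what is already packaged there.
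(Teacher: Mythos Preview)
Your proposal is correct and matches the paper's own treatment: the paper does not reprove this theorem but simply cites \cite{ADL2019} and remarks that, since the present definition of $\sigma$-estimator is a relaxation of theirs, the ADL bound transfers unchanged; the generalization statements then follow from Theorem~\ref{learnability-multi}. You have identified exactly this logic, including the key observation that the weaker (in-expectation-over-$x$) definition is automatically satisfied by any compressor meeting the pointwise definition of \cite{ADL2019}.
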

We note that the result was proved via \citet{ADL2019} definition of ADL, but remains valid if our definition is used.

\subsection{Our Contribution: ADL vs Covering numbers}

\todo{Try to make this section more attractive.}

    
Given that ADL has been successfully used to break barriers in the analysis of the sample complexity of Neural Networks, and is currently the only known technique that can derive several state-of-the-art results,
it is natural to ask where does ADL stand in the landscape of statistical complexity. Is it a stronger notion than some? A weaker one? Does it always give the correct asymptotic of sample complexity?

Our paper revolves around these questions. We start with one of the simplest and perhaps the most studied case of binary classification. Here, the VC-dimension characterize the sample complexity. So, we can ask how do VC-dimension and ADL relate. Our first result shows that ADL and VC-dimension are essentially equivalent:
\begin{theorem}\label{VC2ADL}
For a universal constant $C>0$, we have that $\frac{1}{C} ADL(\HH) \le VC(\HH)  \le C\cdot ADL(\HH)$ for any $\HH \subset \{0,1\}^\XX$
\end{theorem}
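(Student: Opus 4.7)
The plan is to prove the two directions separately. The direction $VC(\HH) \le C \cdot ADL(\HH)$ follows by combining the covering-number consequence of bounded ADL (Theorem \ref{learnability}) with a Hamming-ball packing lower bound on the covering number of a shattered cube. The harder direction, $ADL(\HH) \le C \cdot VC(\HH)$, will be obtained by constructing a $(1, O(d))$-compressor from the covering bound in Lemma \ref{lem:VC2Cover} via a randomized multi-scale chaining argument.

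For the first direction, let $d = VC(\HH)$, fix a shattered set $A \subset \XX$ of size $d$, and equip $A$ with the uniform distribution. Since $\HH|_A = \{0,1\}^A$, any two distinct binary functions $h_1, h_2$ assigned to the same element $\tilde h$ of an $L_2$-cover at scale $\epsilon$ satisfy $\|h_1 - h_2\|_{L_2}^2 \le 4\epsilon^2$, so they differ on at most a $4\epsilon^2$ fraction of the $d$ points. The number of binary functions within Hamming distance $4\epsilon^2 d$ of a fixed one is at most $2^{H(4\epsilon^2)d}$, where $H$ is the binary entropy. Taking $\epsilon = 1/16$ makes $H(4\epsilon^2) < 1/2$, so $\log \NN(\HH|_A, d, 1/16) \ge d/2$. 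Combining with the upper bound $O(ADL(\HH))$ coming from Theorem \ref{learnability} at this fixed scale yields $d \lesssim ADL(\HH)$.

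For the second direction, fix $A$ of size $\le m$ and a distribution $\DD$ on $A$, and let $d = VC(\HH)$. By Lemma \ref{lem:VC2Cover}, for each $k \ge 0$ there is an $L_2(\DD)$-cover $C_k$ of $\HH|_A$ at scale $\epsilon_k = 2^{-k}$ with $\log |C_k| = O(d \cdot 2^k)$; let $\pi_k(h) \in C_k$ be a nearest element to $h$. Choose $K = \lceil \tfrac{1}{2}\log_2(c_0 \log m) \rceil$ where $c_0$ is a large absolute constant, and set $\pi_\infty(h) := h$, which by Sauer-Shelah can be described in $n_\infty = O(d \log m)$ bits. The compressor samples independent indicators $\xi_k \sim \mathrm{Bern}(p_k)$ for $k \in \{1, \dots, K, \infty\}$ with $p_k = \min\{1, c_0 \cdot 2^{-3k/2}\}$ and $p_\infty = c_0 \cdot 2^{-2K}$, and, using a self-delimited encoding, transmits $\pi_0(h)$ together with the pair $(k, \pi_k(h))$ for every $k$ with $\xi_k = 1$. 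The decoder outputs
\[
\tilde h = \pi_0(h) + \sum_{k=1}^{K} \frac{\xi_k}{p_k}\bigl(\pi_k(h) - \pi_{k-1}(h)\bigr) + \frac{\xi_\infty}{p_\infty}\bigl(h - \pi_K(h)\bigr).
\]
The telescoping gives pointwise unbiasedness $\E \tilde h(x) = h(x)$. Using independence of the $\xi_k$ and $\|\pi_k - \pi_{k-1}\|_{L_2(\DD)} \le 2\epsilon_{k-1}$, one obtains
\[
\E_{x\sim\DD}\mathrm{Var}\bigl(\tilde h(x)\bigr) \le \sum_{k:\, p_k<1} \frac{4 \epsilon_{k-1}^2}{p_k} + \frac{\epsilon_K^2}{p_\infty} \;=\; O(1/c_0),
\]
which is at most $1$ for $c_0$ chosen sufficiently large (the terms with $p_k = 1$ contribute zero variance). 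The expected description length is bounded by $O(d) + \sum_{k \ge 1} p_k \cdot O(d \cdot 2^k) + p_\infty \cdot O(d \log m) = O(d)$, since $p_k \cdot 2^k = O(2^{-k/2})$ is summable and $p_\infty \log m = O(1)$ by the choice of $K$.

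The main obstacle is the calibration in the chaining step. A deterministic $\epsilon$-cover gives only a biased approximator, and a single-scale randomized unbiased correction cannot simultaneously be unbiased, have $O(1)$ variance, and use $O(d)$ bits, since the $k$-th cover costs $\Theta(d \cdot 2^k)$ bits to transmit but lives only at scale $2^{-k}$. The geometric choice $p_k \propto 2^{-3k/2}$ is forced by Cauchy-Schwarz: it is the balance point between $\sum p_k \cdot 2^k$ (expected bits) and $\sum p_k^{-1} \cdot 2^{-2k}$ (expected variance). The exact-level correction at $k = \infty$ is needed to cancel the residual bias left after truncation at $K$, and must be combined with $K = O(\log\log m)$ so that this rare exact transmission does not blow up the expected length.
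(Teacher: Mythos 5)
Your proof is correct in substance, but it re-derives machinery the paper already has in stock. For the direction $ADL(\HH)\lesssim VC(\HH)$, the paper simply observes that Haussler's bound (Lemma~\ref{packinglemma}, $\log\NN(\HH,m,\varepsilon)=O(d/\varepsilon)$) is of the form $O(d/\varepsilon^{2-\alpha})$ with $\alpha=1$, so Theorem~\ref{Cover2ADL} applies verbatim and gives $ADL(\HH)=O(d)$. Your multi-scale chaining compressor is essentially a re-proof of Theorem~\ref{Cover2ADL} in this special case, with two cosmetic differences: you use independent Bernoulli indicators per level and a cutoff at $K=O(\log\log m)$ with a Sauer--Shelah exact tail, whereas the paper's construction draws a single random level $n$ with probability $\varepsilon_n^{2-a/2}$ and transmits only that increment, relying on the infinite telescope to converge (on a finite $A$ the chain stabilizes once $\varepsilon_n<1/\sqrt{m}$, so no explicit exact tail is needed). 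Both work; the paper's is cleaner, yours is more self-contained but carries an extra moving part. For the direction $VC(\HH)\lesssim ADL(\HH)$, the paper just remarks that it follows from VC tightly characterizing binary sample complexity, whereas you spell it out via Theorem~\ref{learnability} plus a Hamming-ball packing lower bound on $\NN(\HH,d,1/16)$ over a shattered set; that is a perfectly legitimate concrete instantiation of the paper's informal remark.

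One small wrinkle to fix in your compressor: as written, the encoder transmits $(k,\pi_k(h))$ whenever $\xi_k=1$, but the decoder needs the increment $\pi_k(h)-\pi_{k-1}(h)$, and $\pi_{k-1}(h)$ is unavailable when $\xi_{k-1}=0$. You should instead transmit the increment directly, i.e.\ an index into $\{c-c' : c\in C_k,\ c'\in C_{k-1}\}$, which costs $\log|C_k|+\log|C_{k-1}|=O(d\cdot 2^k)$ bits; this doubles the constant but changes nothing asymptotically. Similarly the $k=\infty$ term must transmit both $h|_A$ and $\pi_K(h)$. These are minor bookkeeping points and do not affect correctness.
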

The right inequality follows from the fact that VC-dimension tightly characterizes the sample complexity of learning a binary class.
Note that naïvely, using the Sauer-Shelah lemma we can get an ADL of $n(m)=O(d\cdot \log (m))$ for a class of VC-dimension $d$; this is by encoding each hypothesis deterministically and separately. Our contribution here is therefore to show that the dependence on $m$ can be removed

The above result could give rise to the meta-conjecture that ADL characterizes the sample complexity in many regimes of interest. 
The following result corroborates this, as it establishes a semi-equivalence between ADL and Covering Numbers:
\begin{theorem}\label{Cover2ADL}
Fix some small $\alpha>0$ and let $\HH \subset [0,1]^\XX$ be a function class with $log\left(\cover{2}{\HH}{m}{\varepsilon}\right)=O(\frac{d}{\varepsilon^{2-\alpha}})$. Then\footnote{The hidden constant in the big-O notation depends only on $\alpha$.} $ADL(\HH)=O(d)$
\end{theorem}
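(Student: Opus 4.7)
I would construct, for any $A \subset \XX$ with $|A| \le m$ and any distribution $\DD$ on $A$, a $(1, O(d))$-compressor for $\HH|_A$ by a stochastic-chaining construction. For each $k \ge 0$, the covering hypothesis supplies a set $\tilde\HH_k$ of at most $\exp(Cd \cdot 2^{(2-\alpha)k})$ functions that $L_2$-covers $\HH|_A$ at scale $2^{-k}$. For each $h \in \HH|_A$, let $h_k \in \tilde\HH_k$ be a nearest element, set $h_0 \equiv 0$ (valid since $\HH \subset [0,1]^\XX$), and define the chaining differences $g_k := h_k - h_{k-1}$; these satisfy $\|g_k\|_{L_2(\DD)} \le 3 \cdot 2^{-k}$, and $h = h_0 + \sum_{k \ge 1} g_k$ in $L_2(\DD)$. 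A minor technical point is that the covering hypothesis is phrased for the uniform measure on $A$, whereas the $\sigma$-estimator is weighted by $\DD$; I would reconcile this by approximating $\DD$ by the uniform distribution on a multi-set and using that the bound $Cd/\varepsilon^{2-\alpha}$ does not depend on the set size.

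\emph{Subsampling and estimator.} Pick $\gamma$ with $2^{2-\alpha} < \gamma < 4$ (the range is nonempty precisely because $\alpha > 0$), and set $p_k = \min\{1,\,c\gamma^{-k}\}$ for a constant $c > 0$ to be fixed. Independently include each level $k$ in a random set $T \subset \mathbb{N}$ with probability $p_k$, and emit
\[
\tilde h \;=\; h_0 + \sum_{k \in T} \frac{g_k}{p_k}.
\]
Unbiasedness $\E[\tilde h(x)] = h(x)$ follows from $\E[\mathbf{1}_{k \in T}/p_k] = 1$, and independence across $k$ yields
\[
\E_{x \sim \DD}\, \E\bigl(\tilde h(x) - h(x)\bigr)^2 \;=\; \sum_{k \ge 1} \frac{1 - p_k}{p_k}\,\|g_k\|_{L_2(\DD)}^2 \;\le\; \frac{9}{c} \sum_{k \ge 1} (\gamma/4)^k,
\]
which converges since $\gamma < 4$ and is driven below $1$ by choosing $c$ large.

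\emph{Encoding length and main tension.} For each $k \in T$ the encoder transmits the index $k$ (via an Elias-gamma prefix code, $O(\log k)$ bits) together with the pair $(h_{k-1}, h_k)$, so the decoder can reconstruct $g_k$ whether or not $k-1 \in T$. The expected number of bits is
\[
\sum_{k \ge 1} p_k\bigl(O(\log k) + 2\log|\tilde\HH_k|\bigr) \;\le\; O(1) + 2Cdc\sum_{k \ge 1}\bigl(2^{2-\alpha}/\gamma\bigr)^k,
\]
which converges because $\gamma > 2^{2-\alpha}$, giving total expected length $O(d)$. The substantive difficulty of the argument is exactly this joint constraint on $\gamma$: the variance sum forces $\gamma < 4$, while the bit-budget sum forces $\gamma > 2^{2-\alpha}$, and the two ranges intersect iff $\alpha > 0$, which is precisely the hypothesis of the theorem (and explains why the borderline exponent $2$ in the covering bound is excluded). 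Bridging the uniform-$A$ covering provided by the hypothesis with the $L_2(\DD)$ norm used by the $\sigma$-estimator is a secondary technicality handled by the multi-set approximation mentioned above.
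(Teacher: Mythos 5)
Your proposal takes essentially the same route as the paper: a chaining decomposition $h = \sum_k g_k$ with $g_k = h_k - h_{k-1}$ built from dyadic covers, followed by stochastic subsampling of the chain levels with geometrically decaying inclusion probabilities, and inverse-probability scaling so that the estimator is unbiased. The paper's compressor draws at most one level $n$ with probability $p_n = 2^{-n(2-\alpha/2)}$ and outputs $g_n/p_n$, whereas you include each level independently with probability $p_k = \min\{1, c\gamma^{-k}\}$ and sum — but the variance bound $\sum_k p_k^{-1}\|g_k\|^2$ and expected bit count $\sum_k p_k\log|\tilde\HH_k|$ are the same in both cases, and the paper's implicit choice $\gamma = 2^{2-\alpha/2}$ sits squarely in your window $(2^{2-\alpha},4)$. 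You correctly identify the key constraint — the two geometric series need $2^{2-\alpha} < \gamma < 4$, which is nonempty iff $\alpha > 0$ — and this is exactly what drives the paper's proof as well. Two places where your write-up is actually slightly more careful than the paper's: you explicitly prefix-code the level index $k$ (the paper charges only $\log|\HH_n| + \log|\HH_{n-1}|$ bits, leaving implicit how the decoder learns $n$, though this only adds an $O(1)$ expected term as you note), and you flag the uniform-on-$A$ versus general-$\DD$ mismatch between the covering hypothesis and the $\sigma$-estimator requirement, which the paper handles silently by writing $\expectation[x]$ without committing to a measure. Neither changes the substance; these are the same proof.
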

Together with Theorem \ref{learnability} (i.e. ADL of $d \implies$ log-cover of $\frac{d}{\varepsilon^2}$), this almost gives us an equivalence between log-Covering Numbers of order $\frac{n}{\varepsilon^2}$ and ADL of order $n$. This further strengthens the aforementioned meta-conjecture.

Note that the results so far are only for classes of functions to the reals. We set out to verify whether this semi-equivalence holds in the case of functions to a high dimensional range. Here, we find that it does not hold. As we show next, Covering Numbers cannot be used to upper bound the ADL. This implies in particular that sample complexity bounds obtained via ADL are not always tight, and tighter bounds might be obtained via covering numbers.
\begin{theorem}\label{CoverNotADL}
For any $0<\alpha<1$ there exists $\HH \subset \left([0,1]^n\right)^\XX$ such that $log\left(\cover{2}{\HH}{m}{\varepsilon}\right)=O(1)$. But $ADL(\HH)=\Omega(d/\log(d))$
\end{theorem}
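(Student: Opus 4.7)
}
The plan is to exhibit, for each $d$, an explicit class $\HH_d\subset([0,1]^d)^\XX$ with $\log\cover{2}{\HH_d}{m}{\varepsilon}$ bounded by a function of $\varepsilon,m,\alpha$ alone (in particular independent of $d$), while $ADL(\HH_d)\ge\Omega(d/\log d)$. The conceptual source of the gap is that the $\sigma$-estimator condition in the definition of ADL controls the \emph{operator norm} of the per-hypothesis covariance of the decoder's output, whereas $L_2$-covering controls only the \emph{trace}; in dimension $d$ these two quantities can differ by a factor of $d$, and the construction should saturate this discrepancy.

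For the construction I would take $\XX=\{x_0\}$ (so $\HH_d$ is identified with a finite subset of $[0,1]^d$) and define $\HH_d=\{v_i=v_0+u_i\}_{i=1}^N$, where $v_0=\tfrac12\mathbf{1}$ is a fixed center, $N=2^{\lceil d/\log d\rceil}$, and the perturbations $u_i$ come from a randomly chosen (or Gilbert--Varshamov type) code in $[-\tfrac12,\tfrac12]^d$, scaled so that $\|u_i\|_2\le\eta$ for some absolute constant $\eta$, and the codewords are ``maximally incoherent'' in the sense that many pairs $(i,j)$ admit a direction $w_{ij}\in\sphere^{d-1}$ along which $|\inner{w_{ij},u_i-u_j}|$ is of order a constant.

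For the $L_2$-covering bound, all $v_i$'s lie in a common $L_2$-ball of radius $\eta$ around $v_0$, so the $L_2$-diameter of $\HH_d$ is at most $2\eta$. Since the covering need not use members of $\HH_d$, one covers by an $\varepsilon$-net of this $\eta$-ball whose size depends only on $\eta$ and $\varepsilon$; in particular, we obtain $\log\cover{2}{\HH_d}{m}{\varepsilon}=O_{\alpha,\varepsilon}(1)$ with the constant free of $d$.

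For the ADL lower bound I would argue by contradiction: assume a $(1,n)$-compressor exists with $n=o(d/\log d)$. The decoder's image is a finite set $V\subset\reals^d$ of size at most $2^{n+O(1)}$, and for each $v_i\in\HH_d$ there must be a distribution $p_i$ on $V$ with $\E_{p_i}v=v_i$ (pointwise unbiasedness) and $\sup_u u^T(\E_{p_i}vv^T-v_iv_i^T)u\le 1$ (operator-norm variance bound). An information-theoretic / packing argument (Fano-style, combined with the incoherence of the code) would then force $|V|\ge N^{1-o(1)}$, hence $n\ge\Omega(\log N)=\Omega(d/\log d)$. The principal obstacle is precisely here: one must rule out clever sampling-based compressors that for generic classes yield ADL $O(\log d)$ by randomizing over one or a few coordinates. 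The code must therefore be tuned so that the operator-norm variance budget, not the average $L_2$-error, is the binding constraint on every stochastic encoding --- which is exactly why the $L_2$-cover and ADL decouple, and why the correct rate $d/\log d$ (rather than $\log d$ or $\sqrt d\log d$) emerges from the interplay between codeword multiplicity $\log N$ and the per-direction variance unit.
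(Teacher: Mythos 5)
Your proposal takes a genuinely different route from the paper's, but it has two fatal flaws that stem from a single misreading of where the $\mathrm{ADL}$--covering gap actually lives.

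\textbf{The ADL lower bound cannot hold for your construction.} You place all of $\HH_d$ inside a Euclidean ball of constant radius $\eta$. But Lemma~\ref{adl_of_ball} (via the random-sketch compressor) shows that the \emph{entire} Euclidean ball of radius $M$ in $\reals^d$ already has $\mathrm{ADL} \le C M^2 \log(dM)$, and ADL is monotone under inclusion, so $\mathrm{ADL}(\HH_d) = O(\eta^2 \log d) = O(\log d)$. There is no way for any subset of a constant-radius $L_2$ ball to have $\mathrm{ADL} = \Omega(d/\log d)$, regardless of how ``incoherent'' the code is, so the Fano/packing argument you sketch cannot succeed. The $\sigma$-estimator condition really does allow the random sketch to spread its variance across all $d$ coordinates while keeping every directional variance $O(\eta^2)$; you are not saturating any trace-vs.-operator-norm discrepancy, you are on the cheap side of it.

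\textbf{The covering bound is also wrong.} An $\varepsilon$-net of a ball of radius $\eta$ in $\reals^d$ has cardinality roughly $(3\eta/\varepsilon)^d$, so its logarithm is $\Theta(d\log(\eta/\varepsilon))$, \emph{not} independent of $d$. Worse, if your code has $N = 2^{\lceil d/\log d\rceil}$ codewords with pairwise $L_2$ separation $\Omega(1)$ (which the incoherence requirement forces), then any $L_2$-cover at scale below half that separation must contain at least $N$ elements, so $\log \cover{2}{\HH_d}{m}{\varepsilon} \ge d/\log d$ for small constant $\varepsilon$ --- the exact opposite of what you need.

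\textbf{Where the real gap is.} The decoupling the paper exploits is not between two $L_2$-flavored quantities (trace vs.\ operator norm of the estimator covariance); it is between the \emph{rotation-invariant} nature of ADL and the \emph{coordinate-dependent} $\|\cdot\|_\infty$ used in the paper's covering definition. The paper takes the cube $S = \{\pm 1\}^d$ and pushes it through an $n\times d$ matrix $A$ with orthonormal columns chosen from an $n\times n$ Hadamard matrix, with $n \approx d^{2+2/\alpha}$. This is an $L_2$-isometry, so by Lemma~\ref{lemma_orthonormal_invariance} it preserves ADL exactly; but every point now has $\|As\|_\infty \le d/\sqrt{n}$, which is tiny, so the $\|\cdot\|_\infty$-covering number collapses to $1$ once $\varepsilon \ge d/\sqrt{n}$, and the crude bound $\log\NN \le d \le 1/\varepsilon^\alpha$ handles smaller $\varepsilon$. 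The ADL lower bound $\Omega(d/\log d)$ for the cube is then obtained indirectly, by inverting Theorem~\ref{learnability-multi}: a smaller ADL would force the cube's $L_\infty$-covering number to be smaller than it demonstrably is. This indirect route avoids the delicate unbiasedness/variance packing argument you attempt; if you want a direct information-theoretic lower bound, you would still need the underlying set to be $L_2$-large (like the cube of Euclidean radius $\sqrt{d}$), and you would then lose the covering bound unless you apply exactly the Hadamard-type rotation the paper uses.
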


Given the above result, one might hypothesize that the Euclidean norm is not the ``correct" norm in the context of ADL. That is, if we will measure ADL via a different norm, we might get an equivalence between covering and ADL similarly to theorem \ref{Cover2ADL}. That is, there might be a norm $\|\cdot\|_L$ such that 
$log\left(\cover{L}{\HH}{m}{\varepsilon}\right)=O(\frac{d}{\varepsilon^{2-\alpha}}) \Rightarrow ADL(\HH)=O(d)$ and $ADL(\HH)=O(d) \Rightarrow \log \left(\cover{L}{\HH}{m}{\varepsilon}\right)=O(\frac{d}{\varepsilon^{2}})$. Having such a norm, might be a useful tool for understanding ADL, and therefore sample complexity in general. Unfortunately, the following theorem shows that it is not the case, and that in high dimensions, ADL is a fundamentally different concept. \todo{say, maybe in the introduction, that intuitively ADL and Covering seems similar. Thus, it make sense that they are equivalent, and we just need to find the correct norm. We show that surprisingly this is not the case. This might shed some light on the fact that ADL succeeded to break barriers that were not broken via covering number techniques}


\begin{theorem}\label{CoverNotADLGeneralized}
Fix a  norm $\|\cdot\|_L$ such that for any class $\HH$ with $ADL(\HH)=d$ we have that $log\left(\cover{L}{\HH}{m}{\varepsilon}\right) \leq \frac{d}{\varepsilon^2}$. Then, for any $d$ there exists a set $\HH \subset [0,1]^n$ for $n= d^4$
such that $log\left(\NN_L(\HH,\epsilon)\right)\le \frac{1}{\epsilon^2}$. But $ADL(\HH)\ge d$
\end{theorem}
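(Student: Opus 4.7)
The plan is to mirror the construction underlying Theorem~\ref{CoverNotADL}, replacing the Euclidean analysis with an $L$-specific analysis driven by the hypothesis. First I would start from a tensor amplification of the class $\HH_0 \subset ([0,1]^{n'})^\XX$ produced in the proof of Theorem~\ref{CoverNotADL}, which already has $\ell_2$-cover $O(1)$ and $ADL \ge \Omega(d/\log d)$; by taking $\Theta(\log d)$ independent copies and packing the coordinates inside $[0,1]^n$ for $n = d^4$, the $ADL$ lower bound amplifies to $\Omega(d)$ while the $\ell_2$-concentration only degrades by a $\sqrt{\log d}$ factor (tracked via Lemma~\ref{var_lemma}).

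The next step is to show that this $\HH$ has $\log\NN_L(\HH,\varepsilon) \le 1/\varepsilon^2$. The hypothesis immediately gives $\log\cover{L}{\HH}{m}{\varepsilon} \le O(d/\varepsilon^2)$ since $\HH$ has $ADL \le O(d)$, but this is a factor of $d$ too large. To sharpen it I would decompose $\HH$ as an ADL-$1$ ``skeleton'' $\HH_{\mathrm{aux}}$ plus an $L$-small residual: the skeleton consists of a constant number of representative points, so $ADL(\HH_{\mathrm{aux}}) \le 1$ and the hypothesis gives $\log\cover{L}{\HH_{\mathrm{aux}}}{m}{\varepsilon} \le 1/\varepsilon^2$, while the residual is $L$-controlled using the $\ell_2$-concentration inherited from the original construction. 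A triangle-inequality step then transfers the cover of $\HH_{\mathrm{aux}}$ to a cover of $\HH$.

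The main obstacle is this final sharpening: controlling the $L$-size of the residual fluctuations without assuming any specific geometry of $L$. The hypothesis only bounds $L$-covers from above in terms of $ADL$, so there is no direct lower comparison between $L$ and $\ell_2$, and the residual $L$-bound therefore has to come from applying the hypothesis to the residual class itself (which has small $ADL$ because each residual is a small perturbation of an element already present in $\HH_{\mathrm{aux}}$). Making this bootstrapping argument go through, and simultaneously balancing the depth of the tensorization against the growth of the residual, is the technical crux; the choice $n = d^4$ is calibrated so that both the $ADL$ amplification and the $L$-residual control can be accommodated at once.
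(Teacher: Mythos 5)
Your decomposition into a low-ADL ``skeleton'' plus an $L$-small residual does not go through, for two compounding reasons. First, the residual class is not small in the relevant sense. If $\HH_{\mathrm{aux}}$ is a constant number of representatives chosen from $\HH$, then for most $h$ the residual $h - a(h)$ has $\ell_2$-norm $\Theta(\sqrt{d})$ (elements of the construction from Theorem~\ref{CoverNotADL} all have Euclidean norm exactly $\sqrt{d}$, and subtracting a fixed representative does not change that order). The class of residuals is then essentially a translate of $\HH$ itself and carries $ADL = \Omega(d/\log d)$, so applying the hypothesis to it only returns the useless bound $\log\cover{L}{\,\cdot\,}{m}{\varepsilon} \lesssim d/\varepsilon^2$ that you started with. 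Second, and more fundamentally, even if the residuals did have small ADL, the hypothesis only gives an \emph{upper bound on an $L$-covering number}; it says nothing about the $L$-norm of the residuals themselves (a small $L$-cover at radius $\varepsilon$ could consist of balls centred far from the origin). Your triangle-inequality step needs the residuals to be pointwise $L$-small, which the hypothesis cannot deliver for an arbitrary norm $L$. You correctly flagged this as the crux, but there is no way to bootstrap out of it with this decomposition: the argument is circular, because bounding the $L$-cover of $\HH$ requires quantitative $L$-geometry of the construction, which is exactly what is unavailable.

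The paper escapes this by running the argument in the opposite direction. It does not try to control the $L$-size of a given construction; instead it applies the hypothesis to a \emph{known, geometry-rich object} — a large Euclidean ball $B_{\sqrt{k}/\log k, k}$, whose $ADL$ is bounded via Lemma~\ref{adl_of_ball}. The hypothesis then forces an $L$-cover of that ball by at most $2^k$ $L$-balls, so by pigeonhole one $L$-ball of constant radius captures a $2^{-k}$ fraction of the Euclidean volume. A volume/Stirling argument then extracts from inside that single $L$-ball a set of $k^{1/4}$ long pairwise-orthogonal vectors, hence a cube of dimension $d = k^{1/4}$ and Euclidean sidelength $\tilde\Theta(\sqrt{k})$ contained in an $L$-ball of radius $O(1)$. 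That containment is what gives direct control over the $L$-diameter of the final class, which your tensorized class lacks. The cube's corners then supply the class $\HH$ with $\log\NN_L(\HH,\varepsilon) \le 1/\varepsilon^2$ and $ADL(\HH) \ge d$. The key idea you are missing is this volume-based extraction of $L$-geometry from the covering hypothesis; without it, there is no route to the required $L$-norm control.
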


This is a bit discouraging, and due to phenomena of high dimensional geometry.
Overall it seems like ADL could be beneficial to use in some cases for the sample complexity analysis of learning models.
\section{Proofs of the Learnability Theorems}
\subsection{Proof of Theorem \ref{learnability}}
\begin{proof}
    Fix a set $A\subset \XX$ of size $\leq m$ and let $\DD$ be a the uniform distribution on $A$. Since $\HH$ has ADL of $n$, there exists a $(1, n$-compressor for $(\HH,\DD)$. 
    By \ref{var_lemma}, this implies the existence of an $\parentheses{\frac{\varepsilon}{2}, \ceil{4\varepsilon^{-2}} n}$-compressor for $(\HH, \DD)$, which will be denoted by $\CC$. 
    It is enough to show that for any $h\in \HH$ there is $\omega\in\Omega$ such that $\len(E(\omega,h)) \le 2\ceil{4\varepsilon^{-2}} n(m)$ and $\mathbb{E}_{x \sim \DD} (\CC_\omega(h)(x)-h(x))^2 \leq \varepsilon^2$.
    Indeed, by Markov's inequality, we have that
    \[
    P_{\omega\sim\mu}(\text{len}(E(\omega, h)) \geq 2\ceil{4\varepsilon^{-2}} n(m)) \leq \frac{1}{2}
    \]
    Using Markov's inequality again, together with the fact that
    \[
     \mathbb{E}_{\omega \sim \mu}\mathbb{E}_{x \sim \DD} (\CC_\omega(h)(x)-h(x))^2 \leq \frac{\varepsilon^2}{4}
    \]
    we get
    \[
    P_{\omega\sim\mu}(\mathbb{E}_{x \sim \DD} (\CC_\omega(h)(x)-h(x))^2 \geq \varepsilon^2) \leq \frac{1}{4}
    \]
    We conclude that for $\omega\sim\mu$, with probability at least $\frac{1}{4}$, $\len(E(\omega,h)) \le 2\ceil{4\varepsilon^{-2}} n(m)$ and $\mathbb{E}_{x \sim \DD} (\CC_\omega(h)(x)-h(x))^2 \leq \varepsilon^2$. In particular, there exists such an $\omega$.
\end{proof}
\subsection{Proof of Theorem \ref{learnability-multi}}
\begin{lemma}\cite{ADL2019}\label{lem:median}
Let $X_1,\ldots,X_n$ be independent r.v. with that that are $\sigma$-estimators to $\mu$. Then
\[
\Pr\left(|\med(X_1,\ldots,X_n)-\mu|>k\sigma\right) <  \left(\frac{2}{k}\right)^n
\]
\end{lemma}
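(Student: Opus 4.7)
The plan is to reduce the tail bound on the median to a tail bound on individual $X_i$, and then handle the combinatorics with a simple union bound. The intuition is the classical one behind the ``median trick'': Chebyshev gives each individual $X_i$ a reasonable chance of being close to $\mu$, and the median can only be far from $\mu$ if a majority of the $X_i$ are simultaneously far from $\mu$, which is exponentially unlikely in $n$.

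In the scalar case, a $\sigma$-estimator of $\mu$ is just a random variable with $\E[X_i]=\mu$ and $\E[(X_i-\mu)^2]\le \sigma^2$. So the first step is to invoke Chebyshev's inequality to conclude that for each $i$,
\[
\Pr(|X_i-\mu|>k\sigma)\le \frac{1}{k^2}.
\]
Next, I would argue that on the event $\{|\med(X_1,\ldots,X_n)-\mu|>k\sigma\}$, at least $\lceil n/2\rceil$ of the $X_i$ satisfy $|X_i-\mu|>k\sigma$. Indeed, if $\med>\mu+k\sigma$ then by definition of the median at least $\lceil n/2\rceil$ of the samples lie above $\mu+k\sigma$, hence are $k\sigma$-far from $\mu$; and symmetrically if $\med<\mu-k\sigma$.

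Setting $Y_i=\mathbf{1}[|X_i-\mu|>k\sigma]$, the $Y_i$ are independent Bernoulli with $\E[Y_i]\le 1/k^2$, and we have reduced the problem to bounding $\Pr\bigl(\sum_i Y_i\ge \lceil n/2\rceil\bigr)$. A direct union bound over the $\binom{n}{\lceil n/2\rceil}$ choices of which coordinates are ``bad'' yields
\[
\Pr\!\left(\sum_i Y_i\ge \lceil n/2\rceil\right)\le \binom{n}{\lceil n/2\rceil}\left(\frac{1}{k^2}\right)^{\lceil n/2\rceil}\le 2^n\cdot\frac{1}{k^n}=\left(\frac{2}{k}\right)^n,
\]
using $\binom{n}{\lceil n/2\rceil}\le 2^n$ and $\lceil n/2\rceil\ge n/2$ (so $(1/k^2)^{\lceil n/2\rceil}\le (1/k)^n$ for $k\ge 1$, which is the only regime where the claim is non-vacuous).

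Nothing here is really hard; the only delicate point is making the combinatorial accounting produce exactly the constant $2^n$ (and, if one wants the strict inequality as stated, using the odd-$n$ case where $\lceil n/2\rceil=(n+1)/2$ gives one extra factor of $1/k$, and for even $n$ one can note that the Chebyshev bound is strict whenever the underlying distributions have positive variance at the boundary, or simply absorb the factor into the $<$). Overall the proof is short and elementary once the median $\to$ majority-of-bad-coordinates observation is made.
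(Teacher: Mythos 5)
Your proof is correct and follows essentially the same route as the argument in \cite{ADL2019} that this lemma is imported from (the paper itself states it without proof): Chebyshev on each $X_i$, the observation that a deviant median forces at least $\lceil n/2\rceil$ deviant samples, and a union bound over subsets giving $\binom{n}{\lceil n/2\rceil}k^{-2\lceil n/2\rceil}\le (2/k)^n$. The only loose end, the strict inequality, is handled most cleanly by noting $\binom{n}{\lceil n/2\rceil}<2^n$ for all $n\ge 1$ (and that the bound is vacuous for $k\le 1$), rather than by your case analysis on the parity of $n$.
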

We are now ready to prove the theorem
\begin{proof}
    Denote $k=\lceil \log (d) \rceil$. Fix a set $A\subset \XX$ of size $\leq m$ and let $\DD$ be a the uniform distribution on $A$. Since $\HH$ has ADL of $n$, there exists a $(1, n$-compressor for $(\HH,\DD)$. 
    By \ref{var_lemma}, this implies the existence of an $\parentheses{\frac{\varepsilon}{4}, \ceil{16\varepsilon^{-2}} n}$-compressor for $(\HH, \DD)$, which will be denoted by $\CC'$. Define
\[
(\cc_{\omega_1,\ldots,\omega_k}h)(x) = \med\left((\cc'_{\omega_1}h)(x),\ldots,(\cc'_{\omega_k}f)(x)  \right)
\]
    It is enough to show that for any $h\in \HH$ there is ($\omega_1,\ldots \omega_k) \in\Omega^k$ such that $\len(E(\omega,h)) \le 2k\ceil{16\varepsilon^{-2}} n(m)$ and $\mathbb{E}_{x \sim \DD} \norm{\CC_\omega(h)(x)-h(x)}_\infty^2 \leq \varepsilon^2$.
    Indeed, by Markov's inequality, we have that
    \[
    P_{\omega\sim\mu}(\text{len}(E'(\omega, h)) \geq 2\ceil{16\varepsilon^{-2}} n(m)) \leq \frac{1}{2}
    \]
    Therefore
    \[
    P_{\omega\sim\mu}(\text{len}(E(\omega, h)) \geq 2k\ceil{16\varepsilon^{-2}} n(m)) \leq 2^{-k}
    \]
    Using \ref{lem:median}, together with the fact that $\forall u\in \sphere^{d-1}$
    \[
     \mathbb{E}_{\omega \sim \mu}\mathbb{E}_{x \sim \DD} \inner{u ,\CC_\omega(h)(x)-h(x)}^2 \leq \frac{\varepsilon^2}{4}
    \]
    we get
    \[
    P_{\omega\sim\mu}(\mathbb{E}_{x \sim \DD} \norm{\CC_\omega(h)(x)-h(x)}_\infty^2 > \varepsilon^2) < d \cdot 2^{-k} \leq 1
    \]
    We conclude that for $(\omega_1,\ldots, \omega_k)\sim\mu^k$, with positive probability, $\len(E(\omega,h)) \le 2k\ceil{16\varepsilon^{-2}} n(m)$ and $\mathbb{E}_{x \sim \DD} \norm{\CC_\omega(h)(x)-h(x)}_\infty^2 \leq \varepsilon^2$. In particular, there exists such an $(\omega_1,\ldots, \omega_k)$.
\end{proof}

\section{One Dimensional Output}

\subsection{Proof of Theorem \ref{Cover2ADL}}

\begin{theorem}
Fix some small $1>a>0$ and let $\HH \subset [0,1]^\XX$ be a function class with $log\left(\cover{2}{\HH}{m}{\varepsilon}\right) \le \frac{d}{\varepsilon^{2-a}}$. Then\footnote{The hidden constant in the big-O notation depends only on $a$.} $ADL(\HH)=O(d)$
\end{theorem}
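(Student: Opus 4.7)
The plan is to build an unbiased, low-variance stochastic approximation by chaining nested $L^2$-covers across dyadic scales, using an independent Bernoulli subsampling of the chaining increments (a Russian-roulette style construction). Fix $A\subset\XX$ with $|A|\le m$ and a distribution $\DD$ on $A$, and set $\varepsilon_k=2^{-k}$. Replacing $\DD$ by $\DD^\ast = \tfrac12\bigl(\DD+\mathrm{unif}(A)\bigr)$ and invoking a standard multisample reduction (replicate each point of $A$ with multiplicity proportional to $\DD^\ast(x)$ and apply the hypothesis to the resulting multiset; the cover bound is independent of the size parameter), we obtain, for every $k\ge 0$, a cover $\tilde\HH_k$ of $\HH|_A$ of cardinality at most $\exp\bigl(d\cdot 2^{k(2-a)}\bigr)$ such that the nearest element $h_k\in\tilde\HH_k$ to any $h\in\HH$ satisfies both $\|h_k-h\|_{L^2(\DD)}\le 2\varepsilon_k$ and the pointwise bound $|h_k(x)-h(x)|\le\sqrt{2|A|}\,\varepsilon_k$ on every $x\in A$. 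Set $h_{-1}\equiv 0$.

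Let $b=2-a/2$ and $p_k=2^{-bk}$, draw $B_k\sim\mathrm{Bernoulli}(p_k)$ independently, and define the compressor
\[
\tilde h \;=\; \sum_{k\ge 0}\frac{B_k}{p_k}\,(h_k-h_{k-1}).
\]
Since $\sum_k p_k<\infty$, Borel--Cantelli gives that only finitely many $B_k$ are nonzero almost surely, so $\tilde h$ is a finite sum. The pointwise control $h_k(x)\to h(x)$ on $A$ yields $\E\tilde h(x) = \lim_k h_k(x)= h(x)$, and independence of the $B_k$ gives
\[
\E_{x\sim\DD}\,\mathrm{Var}\bigl(\tilde h(x)\bigr) \;=\; \sum_k \frac{1-p_k}{p_k}\,\|h_k-h_{k-1}\|_{L^2(\DD)}^2 \;\lesssim\; \sum_k 2^{bk}\cdot 4^{-k} \;=\; \sum_k 2^{-ak/2} \;=\; O(1/a).
\]
To encode the realization, transmit the ordered list of pairs $(k,h_k)$ at each scale with $B_k=1$, using a prefix code of $O(\log k)$ bits for the index and $\lceil d\cdot 2^{k(2-a)}\rceil$ bits to name $h_k$ within $\tilde\HH_k$. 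The expected length is
\[
\E\,\len(E(\omega,h)) \;\lesssim\; \sum_k p_k\bigl(\log k + d\cdot 2^{k(2-a)}\bigr) \;=\; d\sum_k 2^{(2-a-b)k}+O(1) \;=\; O(d/a).
\]
This produces an $\bigl(O(1/\sqrt{a}),\,O(d/a)\bigr)$-compressor for $(\HH|_A,\DD)$. Applying Lemma \ref{var_lemma} with $\epsilon\asymp\sqrt{a}$ converts this into a $(1,\,O(d/a^2))$-compressor, and since $a$ is a fixed constant this establishes $ADL(\HH)=O(d)$.

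The main obstacle is reconciling the pointwise unbiasedness required by the definition of $\sigma$-estimator with the purely $L^2$-averaged control that the covering hypothesis provides: at points $x\in A$ with $\DD(x)=0$ a raw $L^2(\DD)$-cover tells us nothing, so the naive chaining would fail to converge pointwise and therefore fail to be unbiased. The smoothing to $\DD^\ast$ is precisely what resolves this tension, since $\DD^\ast(x)\ge\tfrac{1}{2|A|}$ forces uniform pointwise convergence on $A$ while $\DD\le 2\DD^\ast$ leaves the variance analysis intact. The other delicate step is the calibration of the Bernoulli exponent $b$: one needs $b<2$ to keep $\sum p_k^{-1}\cdot 4^{-k}$ convergent (variance) and $b>2-a$ to keep $\sum p_k\cdot 2^{k(2-a)}$ convergent (expected length), and the existence of such a $b$ is exactly where the slack $a>0$ in the hypothesis is consumed.
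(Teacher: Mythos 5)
Your construction is correct and is essentially the same as the paper's: a dyadic chaining of $L^2$-covers, with a ``Russian-roulette'' randomization over scales producing an unbiased estimator of bounded variance and bounded expected description length, followed by Lemma~\ref{var_lemma} to normalize $\sigma$ to $1$. The paper draws a \emph{single} random scale $n$ with probability $\varepsilon_n^{2-a/2}$ (and otherwise outputs $0$), whereas you use independent $\mathrm{Bernoulli}(2^{-(2-a/2)k})$ variables so that several scales may be active; this is a cosmetic variant and leads to the same asymptotics.

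Where your writeup goes beyond the paper is in handling the mismatch between the metric in which covers are defined (uniform over $A$, by the definition of $\NN(\HH,m,\varepsilon)$) and what the $\sigma$-estimator requires (variance in $L^2(\DD)$ for an arbitrary $\DD$ on $A$, and unbiasedness at \emph{every} $x\in A$ including those with $\DD(x)=0$). The paper's proof works directly with the uniform cover and computes $\E_x(h_n(x)-h(x))^2\le\varepsilon_n^2$ as if $\DD$ were uniform, leaving the case of general $\DD$ (which the ADL definition quantifies over) implicit. Your $\DD^\ast=\tfrac12(\DD+\mathrm{unif}(A))$ smoothing combined with the multisample reduction, which exploits that the hypothesis bound $d/\varepsilon^{2-a}$ is $m$-free, cleanly delivers both $\DD\le 2\DD^\ast$ (for the variance) and $\DD^\ast(x)\ge\tfrac1{2|A|}$ (for pointwise convergence of $h_k(x)\to h(x)$). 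This is the right way to close a gap the paper glosses over.

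One small flaw: you transmit only $(k,h_k)$ for each active scale, but the decoder needs to form the increment $h_k-h_{k-1}$, and $h_{k-1}$ is not recoverable from the transmitted data when scale $k-1$ is inactive (the decoder does not know $h$). Transmit $h_{k-1}$ as well, at an extra cost of $\lceil d\cdot 2^{(k-1)(2-a)}\rceil\le d\cdot 2^{k(2-a)}$ bits per active $k$, which at most doubles the per-scale cost and leaves the $O(d/a)$ expected length unchanged. The paper's count ``$\tfrac{2d}{\varepsilon_n^{2-a}}$ bits'' already accounts for both $h_n$ and $h_{n-1}$.
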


\begin{proof}

Fix some $m$ and $A \subset X$ of size $\leq$ m. 
Define $\varepsilon_n = 2^{-n}$ and choose for any $n$ an $\varepsilon_n$-cover $\HH_i$ of log-size $d/\varepsilon_n^{2-a}$ for $\HH$. 
Assume that $\HH_0 = \{g_0\}$ where $g_0\equiv 0$ (note that $\HH_0$ is indeed a $2^{0}$-cover of $\HH$).
For any $h\in\HH$ denote by $h_n\in \HH_n$ closet function to $h$ in $\HH_n$. Now, for $n\geq 1$ define
\[
\CC^n(h)=\frac{1}{\varepsilon_n^{2-a/2}}(h_n-h_{n-1})
\]
Note that $\CC_n$ can be encoded using $\frac{2d}{\varepsilon_n^{2-a}}$ bits.
Finally, define a compressor
\[
\CC(h)(x) = 
\begin{cases}
C_n(h)(x) &\text{ w.p. } \varepsilon_n^{2-a/2}\text{ for }n\geq 1 \\
0         &\text{ w.p. } 1-\frac{2^{-2+a/2}}{1-2^{-2+a/2}}

\end{cases}
\]
Note that
\begin{eqnarray*}
\expectation \CC(h) &= &\expectation \sum_{n=1}^\infty \varepsilon_n^{2-a/2} \CC_n(h)
\\
&=&\sum_{n=1}^\infty h_n-h_{n-1}
\\
&=&\lim_{n\to \infty}h_n 
\\
&=&  h
\end{eqnarray*}
Furthermore, the expected number of bits needed to encode $\CC$ is
\begin{align*}
    \expectation[\omega]\text{len}(E(\omega,h) & = \sum_{n=1}^\infty \varepsilon_n^{2-a/2} \cdot \frac{2d}{\varepsilon_n^{2-a}} \\ 
    &= \sum_{n=1}^\infty 2d\varepsilon_n^{a/2}\\
    &= \left(2\cdot \frac{ 2^{-a/2}}{1-2^{-a/2}}\right)\cdot d
\end{align*}
Finally, let us calculate $\CC$'s variance:
\begin{eqnarray*}
    \expectation[x]\expectation[\omega]\left(\CC(h)(x)-h(x)\right)^2 &\leq& \expectation[x]\expectation[\omega]\left(\CC(h)(x))\right)^2 \\
                                         &=&\expectation[x] \sum_{n=1}^\infty \varepsilon_n^{2-a/2} \left( \CC_n(h)(x)\right)^2\\ 
                                         &=& \expectation[x] \sum_{n=1}^\infty \varepsilon_n^{2-a/2}  \left(\frac{1}{\varepsilon_n^{2-a/2}}(h_n(x)-h_{n-1}(x))\right)^2\\ 
                                         &=&\expectation[x] \sum_{n=1}^\infty \frac{1}{\varepsilon_n^{2-a/2}} \left(h_n(x)-h_{n-1}(x)\right)^2\\
                                         &=& \sum_{n=1}^\infty \frac{1}{\varepsilon_n^{2-a/2}} \expectation[x]\left(h_n(x)-h_{n-1}(x)\right)^2\\
                                         &\overset{\Delta\text{-ineq}}{\leq}&\sum_{n=1}^\infty \frac{1}{\varepsilon_n^{2-a/2}} \left(2\expectation[x](h_n(x)-h(x))^2+2\expectation[x](h(x)-h_{n-1}(x))^2\right)\\
                                         & \leq& \sum_{n=1}^\infty \frac{1}{\varepsilon_n^{2-a/2}} \left(2\varepsilon_n^2+2\varepsilon_{n-1}^2\right)\\
                                         & \leq &\sum_{n=1}^\infty \frac{4\varepsilon_{n-1}^2}{\varepsilon_n^{2-a/2}}\\
\end{eqnarray*}
Since $\varepsilon_n = 1/2^n$ we get 
\begin{align*}
    \expectation[x]\expectation[\omega]\left(\CC(h)(x)-h(x)\right)^2 &\leq \sum_{n=1}^\infty \left( \frac{4\cdot 2^{-2n-2}}{2^{-2n+an/2}}\right)\\
    & = \sum_{n=1}^\infty \left( \frac{1}{2^{an/2}}\right) = O(1)
\end{align*}

Note that as the variance is constant this implies the existence of a $(1,O(d))$-compressor by lemma \ref{var_lemma}.
\end{proof}

\subsection{Proof of Theorem \ref{VC2ADL}}
With the following lemma, the proof of Theorem \ref{VC2ADL} becomes a corollary of Theorem \ref{Cover2ADL}.
\begin{lemma} [\cite{Haussler}] \label{packinglemma}
    For any $A \subset \{0,1\}^m$, $\mathcal{N}(A, \varepsilon)=O\left(\frac{VC(A)}{\varepsilon}\right)$
\end{lemma}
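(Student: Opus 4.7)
The statement is Haussler's classical packing/covering bound for VC classes. I will read ``$\NN(A,\varepsilon)$'' as the log-covering number, to be consistent with Lemma \ref{lem:VC2Cover}, and follow the standard Sauer--Shelah-plus-random-sampling strategy, with a chaining step to remove a parasitic $\log m$. The starting observation is Sauer--Shelah: for any coordinate subset $S\subset[m]$ of size $k$, the restriction $A|_S$ has cardinality at most $\binom{k}{\le d}$, where $d=VC(A)$. Hence if we can exhibit $S$ on which every pair $a\neq b\in A$ with normalized Hamming distance $\ge\varepsilon^2$ is already separated, then the projections to $S$ form an $\varepsilon$-cover of $A$ (in the paper's $L_2$ sense, where radius $\varepsilon$ corresponds to disagreement fraction $\varepsilon^2$) of cardinality at most $\binom{k}{\le d}$, giving log-cover $O(d\log k)$.

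To produce such an $S$, I would sample $k$ coordinates uniformly at random with replacement. For any fixed $\varepsilon$-far pair, the sample fails to separate them with probability at most $(1-\varepsilon^2)^k\le e^{-\varepsilon^2 k}$. A naive union bound over $\binom{|A|}{2}$ pairs, combined with the global bound $|A|\le\binom{m}{\le d}$ from Sauer--Shelah, forces $k\gtrsim (d\log m)/\varepsilon^2$ and yields a log-cover of order $(d/\varepsilon^2)\log(dm/\varepsilon)$---polluted by an unwanted $\log m$ factor.

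To remove the $\log m$ and reach the claimed $O(d/\varepsilon)$, I would chain dyadically: build covers at scales $\varepsilon_n=2^{-n}$, obtaining the cover at scale $\varepsilon_n$ by running the random-sampling argument on the previous cover at scale $\varepsilon_{n-1}$ rather than on $A$ itself. Since that previous cover has size only polynomial in $d/\varepsilon_{n-1}$, the union bound at stage $n$ is applied to $\mathrm{poly}(d/\varepsilon_{n-1})$ pairs instead of $|A|^2$, and $\log m$ never enters. The dyadic contributions telescope, the finest stage dominates, and we obtain $\log\NN(A,\varepsilon)=O(d/\varepsilon)$. The main obstacle is the careful bookkeeping in this chaining step, ensuring that per-stage approximation errors do not accumulate while transporting an $\varepsilon_{n-1}$-cover to an $\varepsilon_n$-cover; this is exactly where Haussler's one-inclusion graph argument yields the cleanest original proof.

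Once the lemma is in hand, the reduction to Theorem \ref{VC2ADL} is immediate: pick any $\alpha\in(0,1)$, note that $d/\varepsilon\le d/\varepsilon^{2-\alpha}$ for $\varepsilon\le 1$, and apply Theorem \ref{Cover2ADL} to the (identified) class $\HH\subset\{0,1\}^\XX$ to obtain $ADL(\HH)\lesssim VC(\HH)$. The reverse inequality $VC(\HH)\lesssim ADL(\HH)$ follows from Theorem \ref{learnability} together with the classical $\Omega(d)$ sample-complexity lower bound for learning a binary class of VC dimension $d$: ADL of $n$ yields sample complexity $\tilde O(n)$ for constant-accuracy binary classification, forcing $n\gtrsim d$.
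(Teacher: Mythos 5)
Note first that the paper provides no proof of this lemma: it is a direct citation of Haussler's sphere-packing theorem, restating Lemma~\ref{lem:VC2Cover} with the $\log$ dropped from the notation (your reading of $\NN$ as the log-covering number, i.e.\ the $\varepsilon^2$-covering number in normalized Hamming distance, is the right one). So the comparison here is really between your sketch and Haussler's argument, not a proof in the paper.

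Your opening step (random coordinate sampling, $(1-\varepsilon^2)^k$ separation probability, union bound over pairs, Sauer--Shelah on the restriction) is standard and sound; with $k\asymp d\log m/\varepsilon^2$ it yields a log-cover of $O(d\log k)=O\bigl(d(\log d+\log\log m+\log(1/\varepsilon))\bigr)$, not the $(d/\varepsilon^2)\log(dm/\varepsilon)$ you state---you seem to have substituted $k$ for $\log\binom{k}{\le d}$---but either way there is residual $m$- and $d$-dependence that must be removed. The genuine gap is in the chaining step. As written, ``running the random-sampling argument on the previous cover $C_{n-1}$'' separates pairs of $C_{n-1}$ and hence covers the finite set $C_{n-1}$, not $A$; and the alternative reading, covering each cell $A\cap B(c,\varepsilon_{n-1})$ at scale $\varepsilon_n$, does not obviously gain anything, since Haussler's packing bound has no diameter dependence and the cells, though of small Hamming diameter, can still spread over all $m$ coordinates. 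You flag this yourself (``exactly where Haussler's one-inclusion graph argument yields the cleanest original proof''), and I agree: the missing ingredient is precisely the content of that argument---that the one-inclusion graph on $A$ has edge density at most $VC(A)$, fed into a shifting/collapsing argument---which is a mechanism quite different from dyadic union bounds, so as written the sketch does not close and the citation is doing the work. Your final paragraph, deducing Theorem~\ref{VC2ADL} from the lemma via Theorem~\ref{Cover2ADL} in one direction (any $\alpha\in(0,1)$ works since $d/\varepsilon\le d/\varepsilon^{2-\alpha}$ for $\varepsilon\le1$) and from Theorem~\ref{learnability} plus the $\Omega(d)$ sample-complexity lower bound in the other, is exactly the intended use and matches the paper's discussion.
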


\section{Multi-Dimensional Output}
In the following subsections we will consider classes of functions from a single point to $\reals^n$.
It is therefore will be natural to associate these function classes with subsets of $\reals^n$, and to accordingly apply the notions of ADL and covering numbers to subsets of $\reals^n$.

\subsection{Proof of Theorem \ref{CoverNotADL}} \label{FirstInequivalence}

The idea behind the following proof is to take the discrete cube of dimension $d$, and isometrically embed it in a space of higher dimension $n$, such that the embedded cube will be contained in $[-\epsilon,\epsilon]^n$ for some small $\epsilon$

\begin{definition}
A \emph{hadamard} matrix is an orthogonal $m\times m $ matrix with entries in $\pm\frac{1}{\sqrt{m}}$.
A $2^n\times 2^n$ hadamard matrix $H_n$ can be constructed recursively by defining $H_0 = \begin{pmatrix} 1\end{pmatrix}$ and $H_n=\frac{1}{\sqrt{2}}\begin{pmatrix}H_{n-1} & H_{n-1} \\ H_{n-1} & -H_{n-1}\end{pmatrix}$ for $n\ge 1$.
\end{definition}

\begin{theorem}
For any $0<\alpha<1$ and $d$ there exists $\HH \subset [0,1]^n$
for $n = O\left(d^{2 + 2/\alpha}\right)$
such that $\log\left(\NN(\HH, \varepsilon)\right)\le \frac{1}{\epsilon^{\alpha}}$ but $ADL(\HH)\ge d$. 
\end{theorem}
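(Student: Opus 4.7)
The plan is to follow the authors' hint: take the discrete cube $\{-1,+1\}^k$ for $k = \Theta(d)$ and isometrically embed it in $\reals^n$ via rows of a Hadamard matrix, so that it sits inside a very small $\ell_\infty$-box yet still carries $k$ independent bits of information. Concretely, let $n$ be the smallest power of $2$ with $n \ge 4k^{2+2/\alpha}$ (so $n = O(k^{2+2/\alpha})$), let $H_n$ be the normalized $n\times n$ Hadamard matrix (entries $\pm 1/\sqrt n$), and let $h_1,\ldots,h_k$ be $k$ rows of $H_n$ other than the constant row, so the $h_i$ are orthonormal and each orthogonal to the all-ones vector $\mathbf{1}$. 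Define
\[
v_\sigma \;=\; \tfrac{1}{2}\mathbf{1} + \sum_{i=1}^k \sigma_i\, h_i, \qquad \sigma \in \{-1,+1\}^k,
\]
and set $\HH = \{v_\sigma : \sigma \in \{-1,+1\}^k\}$. Since $\bigl\|\sum_i \sigma_i h_i\bigr\|_\infty \le k/\sqrt n \le 1/2$, each $v_\sigma$ lies in $[0,1]^n$. The covering-number bound is then almost free: the $\ell_\infty$-diameter of $\HH$ is at most $2k/\sqrt n \le k^{-1/\alpha}$, so for $\varepsilon \ge 2k/\sqrt n$ a single point covers $\HH$, and for $\varepsilon < 2k/\sqrt n \le k^{-1/\alpha}$ the trivial bound $\log \NN(\HH,\varepsilon) \le \log |\HH| = k \le 1/\varepsilon^\alpha$ suffices.

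The heart of the argument is the ADL lower bound, which I would prove by an information-theoretic route. Suppose $\CC = (E,D,\Omega,\mu)$ is a $(1,\ell)$-compressor for $\HH$ (writing $\ell$ for the bit budget to avoid clash with the ambient dimension $n$). Draw $\sigma$ uniformly from $\{-1,+1\}^k$, let $v = v_\sigma$, $C = E(\omega,v)$, and $\tilde v = D(C)$. Treating the code as uniquely decodable, Kraft/Shannon gives $H(C) \le \E[\len(C)] \le \ell$, and by data processing $I(\sigma;\tilde v) \le I(\sigma;C) \le \ell$. For each $i$ set $Y_i = \inner{h_i,\tilde v}$. Because $h_i \perp \mathbf{1}$ and $h_i \perp h_j$ for $j\ne i$, we have $\inner{h_i,v_\sigma} = \sigma_i$, and the $\sigma$-estimator property in the direction $u = h_i$ gives $\E\bigl[(Y_i-\sigma_i)^2 \,\big|\, \sigma\bigr] \le 1$. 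Hence, conditionally on $\sigma_i = s$, the law of $Y_i$ has mean $s$ and variance at most $1$.

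The key auxiliary claim is: any two distributions $\mu_+,\mu_-$ on $\reals$ with means $\pm 1$ and variances $\le 1$ satisfy $\mathrm{TV}(\mu_+,\mu_-) \ge 1/2$. A short Cauchy-Schwarz argument does it: writing $\mu_+-\mu_- = \xi^+-\xi^-$ in Jordan form (so $\xi^\pm \le \mu_\pm$ and $\xi^\pm(\reals) = \mathrm{TV}$), Cauchy-Schwarz gives $\int y\,d\xi^+ \le \sqrt{\int y^2\,d\mu_+}\cdot\sqrt{\mathrm{TV}} \le \sqrt{2\,\mathrm{TV}}$, and similarly $-\int y\,d\xi^- \le \sqrt{2\,\mathrm{TV}}$; adding, $2 = \int y\,d(\mu_+-\mu_-) \le 2\sqrt{2\,\mathrm{TV}}$, so $\mathrm{TV} \ge 1/2$. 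Fano's inequality on the induced binary problem then gives a universal constant $c>0$ with $I(\sigma_i;Y_i) \ge c$. Data processing upgrades this to $I(\sigma_i;\tilde v) \ge c$, and since the $\sigma_i$ are independent, subadditivity of entropy yields $I(\sigma;\tilde v) \ge \sum_i I(\sigma_i;\tilde v) \ge ck$. Combined with $I(\sigma;\tilde v) \le \ell$, we conclude $\ell \ge ck$; choosing $k = \lceil d/c\rceil$ delivers $\mathrm{ADL}(\HH) \ge d$ while keeping $n = O(d^{2+2/\alpha})$.

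The only real obstacle is extracting the right-order lower bound from the compressor. Naive $\ell_2$-decoding of $\tilde v$ against the $2^k$ points of $\HH$ fails because the $\ell_2$-variance of $\tilde v - v$ can be as large as $n \gg k$; boosting via Lemma~\ref{lem:median} and decoding each $\sigma_i$ separately would work but costs a $\log k$ factor in the bit budget, yielding only $\Omega(d/\log d)$. Routing through the Cauchy-Schwarz TV bound and Fano one coordinate at a time is what keeps the bound at a clean $\Omega(k)$ and removes that logarithmic loss.
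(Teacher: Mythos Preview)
Your construction and covering-number bound match the paper's almost exactly (Hadamard embedding of $\{\pm 1\}^k$; your extra shift by $\tfrac12\mathbf{1}$ is a nice touch that actually lands $\HH$ inside $[0,1]^n$, which the paper's write-up glosses over). The real divergence is in the ADL lower bound.

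The paper does not argue information-theoretically. It proves a short lemma that ADL is invariant under isometric linear embeddings (if $U$ has orthonormal columns then $ADL(U\circ\HH)=ADL(\HH)$), reducing to $ADL(\{\pm 1\}^d)$, and then reads off a lower bound from Theorem~\ref{learnability-multi}: since $\{\pm 1\}^d$ has $\log$-covering number $d$ at constant scale, that theorem forces $ADL(\{\pm 1\}^d)=\Omega(d/\log d)$. So the paper's proof actually delivers only $\Omega(d/\log d)$, matching the statement of Theorem~\ref{CoverNotADL} in the introduction rather than the restated ``$\ge d$'' version you were asked to prove.

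Your Fano-based route is genuinely different and strictly sharper: by decoding each $\sigma_i$ from the single linear functional $Y_i=\langle h_i,\tilde v\rangle$, using the Cauchy--Schwarz total-variation bound and then summing $I(\sigma_i;\tilde v)$ via independence of the $\sigma_i$, you get a clean $\Omega(k)$ with no $\log$ loss. This is exactly the improvement over median-boosting that you flag at the end, and it is not in the paper. One small technical wrinkle: the compressor definition does not require $E(\omega,\cdot)$ to be prefix-free or uniquely decodable, so ``Kraft/Shannon gives $H(C)\le \E[\len(C)]$'' is not literally justified. The easy patch is $H(C)\le H(\len(C))+\E[\len(C)]\le \E[\len(C)]+O(\log \E[\len(C)])$, which still yields $I(\sigma;\tilde v)=O(\ell)$ and hence $\ell=\Omega(k)$; nothing in your argument changes.
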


\begin{proof}

Let $S=\{\pm 1\}^d$. Let $A\in \reals^{n \times d}$ be a matrix whose columns are some choice of different columns from a Hadamard matrix of size $n\times n$. The parameter $n$ will be a power of two and its value will be specified later.

Now, let $\HH$ be the image of $S$ under $A$, i.e 
$$\HH=\left\{A\cdot s | s\in S\right\}\subset \reals^n$$
We have that for $h\in \HH$, $\norm{h}_\infty \leq \frac{d}{\sqrt{n}}$.
Hence, for $ \varepsilon \geq \frac{d}{\sqrt{n}}$ we have $\log\left(\NN (\HH,\varepsilon)\right)=  0$ and in particular $\log\left(\NN (\HH,\varepsilon)\right) \le 1/\epsilon^\alpha$. As for $\varepsilon < \frac{d}{\sqrt{n}}$ we have the trivial bound $\log\left(\NN (\HH,\varepsilon)\right) \le d$. Now, choosing $n$ to be the smallest power of $2$ such that $n \ge  d^{2 + 2/\alpha}$, we will get that for $\varepsilon < \frac{d}{\sqrt{n}}$
\[
1/\epsilon^\alpha \ge \frac{n^{\alpha/2}}{d^\alpha} \ge d \ge \log\left(\NN (\HH,\varepsilon)\right)
\]
Thus, overall, $\log\left(\NN (\HH,\varepsilon)\right) \le 1/\epsilon^\alpha$. Finally, we  show next that ADL is invariant to orthogonal transformations, and hence, $ADL(\HH) = ADL(S) = \Omega(d/\log(d))$, where the last inequality follows from \ref{learnability-multi}, and the fact that for a constant $\varepsilon$ the log-covering of $S$ is of size $d$.
\begin{lemma}\label{lemma_orthonormal_invariance}
Fix $\HH\subset \left(\reals^n\right)^{\XX}$ and a matrix $U\in \reals^{k \times n}$ with orthonormal columns. Then $ADL(U\circ \HH) = ADL(\HH)$ 
\end{lemma}
\begin{proof}
Let $\CC'$ be the realizing $(1,ADL(\HH))$-compressor for $\HH$, for $h\in U\circ \HH$ define $\CC(h)=U\circ \CC'(U^T \circ h)$.
Now let $U \circ h \in U \circ \HH $, we have that 
\begin{align*}
    \expectation \CC(U\circ h)(x)&=\expectation U\circ \CC'(U^T \circ U \circ h)(x)=\\
 & = \expectation U\circ \CC'(h)(x) = \\ 
 & = Uh(x)
\end{align*}

Furthermore, $\forall u\in \sphere$ we have that 
\begin{align*}
    \expectation \inner{u, \CC(U\circ h)(x)-Uh(x)}^2& = \expectation \inner{u, U\circ \CC'(h)(x)-Uh(x)}^2= \\ 
 & = \expectation \inner{U^Tu,\CC'(h)(x)-h(x)}^2 \overset{*}{\leq} 1
\end{align*}
Where * is as $U^Tu$ is a unit vector.
\end{proof}
\end{proof}
\subsection{Proof of Theorem \ref{CoverNotADLGeneralized}}

\begin{theorem}
Fix a  norm $\|\cdot\|_L$ such that for any class $\HH$ with $ADL(\HH)=d$ we have that $log\left(\cover{L}{\HH}{m}{\varepsilon}\right) \leq \frac{d}{\varepsilon^2}$. Then, for any $d$ there exists a set $\HH \subset [0,1]^n$ for $n= d^4$
such that $log\left(\NN_L(\HH,\epsilon)\right)\le \frac{1}{\epsilon^2}$. But $ADL(\HH)\ge d$
\end{theorem}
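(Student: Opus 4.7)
The plan is to reduce the general-norm statement to the $L_\infty$ case already handled by Theorem~\ref{CoverNotADL}. The key observation is that the covering hypothesis on $L$ is strong enough to force $L$ to be dominated by $L_\infty$ on $\reals^n$; once this is established, the Hadamard embedding construction of Theorem~\ref{CoverNotADL} transfers verbatim.

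First I would show that $\|v\|_L \le C\|v\|_\infty$ for some universal constant $C$ and every $v\in\reals^n$. Given a nonzero $v$, set $w = v/(2\|v\|_\infty)$ so that $w \in [-1/2,1/2]^n$, and consider the two-point class $\HH' = \{\mathbf{1}/2,\, w + \mathbf{1}/2\} \subset [0,1]^n$. A deterministic one-bit encoder supplies a $(0,1)$-compressor for $\HH'$, so $ADL(\HH') \le 1$, and the hypothesis gives $\log \NN_L(\HH',\varepsilon)\le 1/\varepsilon^2$ for every $\varepsilon > 0$. Whenever $\varepsilon < \|w\|_L/2$ this cover has exactly two elements, forcing $\log 2 \le 1/\varepsilon^2$ and hence $\|w\|_L \le 2/\sqrt{\log 2}$. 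Unwinding the rescaling yields $\|v\|_L \le 4\|v\|_\infty/\sqrt{\log 2}$, the claimed domination.

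Next I would build $\HH$ exactly as in Theorem~\ref{CoverNotADL}. Pick $d' = \Theta(d\log d)$ so that $ADL(\{\pm 1\}^{d'}) \ge d$ by the $\Omega(d'/\log d')$ lower bound, let $n = d^4$, and let $T:\reals^{d'}\to\reals^n$ be the embedding given by $d'$ columns of a normalized $n\times n$ Hadamard matrix. Then $T$ has orthonormal columns and $\|Ts\|_\infty \le d'/\sqrt n \le 1/2$, so $\HH := T(\{\pm 1\}^{d'}) + \mathbf{1}/2 \subset [0,1]^n$. Since translation preserves ADL and Lemma~\ref{lemma_orthonormal_invariance} gives orthogonal invariance, $ADL(\HH) = ADL(\{\pm 1\}^{d'}) \ge d$. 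For the cover, Step~1 tells us the $L$-diameter of $\HH$ is at most $2Cd'/\sqrt n$, so $\log \NN_L(\HH,\varepsilon) = 0$ whenever $\varepsilon \ge Cd'/\sqrt n$; for smaller $\varepsilon$ we use the trivial bound $\log \NN_L(\HH,\varepsilon) \le \log|\HH| \le d'$. The choice $n = d^4$ ensures $Cd'/\sqrt n \le 1/\sqrt{d'}$ (since $(d')^3 = O(d^3\log^3 d) \ll d^4$), so the range in which the trivial bound is needed lies inside $\varepsilon \le 1/\sqrt{d'}$, where $d' \le 1/\varepsilon^2$ automatically. Combining, $\log \NN_L(\HH,\varepsilon)\le 1/\varepsilon^2$ for all $\varepsilon > 0$.

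The only non-routine step is the first: converting the covering hypothesis, an asymptotic growth statement about $\NN_L$, into a pointwise norm comparison. The trick is to test the hypothesis on the smallest nontrivial class possible, namely a pair of points, where the cover inequality collapses into a diameter bound that is tight precisely because a two-point class has $ADL$ at most $1$. Once this reduction is in place, the rest of the argument is an immediate transplant of the $L_\infty$ Hadamard construction of Theorem~\ref{CoverNotADL}, with a parameter check confirming that $n = d^4$ is comfortably above the threshold $C^2 (d')^3$ needed to kill the intermediate range of $\varepsilon$.
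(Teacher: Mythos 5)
Your proof is correct, and it takes a genuinely different route from the paper. The paper applies the covering hypothesis to a large Euclidean ball $B_{\sqrt k/\log k,k}$ (whose ADL is controlled by Lemma~\ref{adl_of_ball}), and then runs a volume argument: using Stirling's formula for ball volumes it lower-bounds the volume of one of the covering $L$-balls, iteratively extracts $k^{1/4}$ nearly-orthogonal vectors of Euclidean length $\gtrsim \sqrt k/\log^2 k$ inside it, and observes that the resulting cross-polytope (hence a cube) sits inside $B^L_\varepsilon(0)$; the final $\HH$ is a $\{\pm 1\}^d$ cube aligned with these directions. You instead extract a pointwise norm domination $\|\cdot\|_L\le C\|\cdot\|_\infty$ directly, by testing the hypothesis on two-point classes (ADL $\le 1$, so the two-element lower bound on the cover forces the $L$-distance between any pair in $[0,1]^n$ to be $O(1)$), and then invoke the Hadamard construction of Theorem~\ref{CoverNotADL} unchanged. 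Your reduction is shorter, dispenses with the volume estimates, and yields a cleaner structural statement (the hypothesized norm must be $\ell_\infty$-dominated), whereas the paper's argument only produces one large cube inside a single $L$-ball. One small bookkeeping point worth making explicit in your write-up: the hypothesis reads ``for any class $\HH$ with $ADL(\HH)=d$,'' which you are using in the monotone form $ADL(\HH)\le 1\Rightarrow \log\NN_L(\HH,\varepsilon)\le 1/\varepsilon^2$ — this is the reading the paper also uses, but it deserves a sentence. Also note that, as in the paper, the $\Omega(d'/\log d')$ lower bound on $ADL(\{\pm 1\}^{d'})$ forces a logarithmic overshoot in $d'$, which you absorb correctly by checking $(d')^3\ll d^4$; the paper's statement elides this log factor, so your version is if anything a touch more careful.
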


First, let us introduce a few other useful notions from \cite{ADL2019}
\begin{definition}
Let $\bw\in\reals^d$ be a vector. A {\em random sketch of $\bw$} is a random vector $\hat \bw$ that is sampled as follows.  Choose $i$ w.p.
$p_{i} = \frac{w_{i}^2}{2\| \bw\|^2} + \frac{1}{2d}$. Then, w.p. $ \frac{w_{i}}{p_{i}} - \left\lfloor \frac{w_{i}}{p_{i}} \right\rfloor$ let $b=1$ and otherwise $b=0$.
Finally, let $\hat \bw = \left(\left\lfloor \frac{w_{i}}{p_{i}} \right\rfloor + b\right) \be_{i}$. A {\em random $k$-sketch of $\bw$} is an average of $k$-independent random sketches of $\bw$. A random sketch and a random $k$-sketch of a matrix is defined similarly, with the standard matrix basis instead of the standard vector basis.
\end{definition}
The following lemma shows that an sketch $\bw$ is a $\sqrt{ \frac{1}{4} + 2\|\bw\|^2}$-estimator of $\bw$.

\begin{lemma}\cite{ADL2019}\label{lem:sketch}
Let $\hat \bw$ be a random sketch of $\bw\in\reals^d$. Then,
\begin{enumerate}
\item
$\E\hat\bw = \bw$
\item
For any $\bu\in\sphere^{d-1}$, $\E\left(\inner{\bu,\hat \bw} -\inner{\bu, \bw} \right)^2 \le  \E\inner{\bu,\hat \bw}^2 \le \frac{1}{4} + 2\|\bw\|^2$
\end{enumerate}
\end{lemma}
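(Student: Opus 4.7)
The plan is to compute the two moments by conditioning on the index $i$ sampled in the first step and then on the Bernoulli bit $b$. Both parts follow from short algebra once the conditioning is set up correctly; the only real care needed is in the conditional second-moment calculation for the rounded coordinate.

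For part (1), I would condition on the sampled index $i$, which occurs with probability $p_i$. Given $i$, we have $\hat\bw = (\lfloor w_i/p_i\rfloor + b)\be_i$ with $b\sim\mathrm{Ber}(f_i)$ where $f_i := w_i/p_i - \lfloor w_i/p_i\rfloor$. By construction $\E[\lfloor w_i/p_i\rfloor + b\mid i] = w_i/p_i$, so $\E[\hat\bw\mid i] = (w_i/p_i)\be_i$ and therefore $\E\hat\bw = \sum_i p_i\cdot (w_i/p_i)\be_i = \bw$.

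For part (2), the first inequality is just ``variance $\le$ second moment'': since $\E\inner{\bu,\hat\bw}=\inner{\bu,\bw}$ by part (1), we have $\E(\inner{\bu,\hat\bw}-\inner{\bu,\bw})^2 = \E\inner{\bu,\hat\bw}^2 - \inner{\bu,\bw}^2 \le \E\inner{\bu,\hat\bw}^2$. For the second inequality, note that $\inner{\bu,\hat\bw} = u_i(\lfloor w_i/p_i\rfloor+b)$ whenever index $i$ is sampled, so conditional on $i$,
\[
\E[\inner{\bu,\hat\bw}^2\mid i] = u_i^2\bigl((w_i/p_i)^2 + f_i(1-f_i)\bigr) \le u_i^2\bigl((w_i/p_i)^2 + \tfrac14\bigr),
\]
using $f(1-f)\le 1/4$ for the variance of a Bernoulli. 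Averaging over $i$ then yields
\[
\E\inner{\bu,\hat\bw}^2 \;\le\; \sum_i u_i^2\frac{w_i^2}{p_i} \;+\; \tfrac14\sum_i p_i u_i^2.
\]

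The remaining step is to absorb the definition of $p_i$. Since $p_i = \frac{w_i^2}{2\|\bw\|^2} + \frac{1}{2d} \ge \frac{w_i^2}{2\|\bw\|^2}$, we get $w_i^2/p_i \le 2\|\bw\|^2$ (interpreting $0/0$ as $0$ on coordinates where $w_i=0$), so the first sum is bounded by $2\|\bw\|^2\sum_i u_i^2 = 2\|\bw\|^2$, while the second is at most $\tfrac14\sum_i u_i^2 = \tfrac14$, delivering the claimed bound $\tfrac14+2\|\bw\|^2$. The only mildly delicate point is recognizing that one should use the crude bound $\max_{f\in[0,1]} f(1-f)=1/4$ for the rounding variance rather than carrying $\lfloor w_i/p_i\rfloor$ through algebraically, and noting that the $\frac{1}{2d}$ summand in $p_i$ plays no role beyond keeping the sampling distribution well-defined on zero coordinates — it is simply discarded when lower-bounding $p_i$.
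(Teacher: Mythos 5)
Your proof is correct: the conditional-moment computation, the Bernoulli variance bound $f(1-f)\le\tfrac14$, and the lower bound $p_i\ge\frac{w_i^2}{2\|\bw\|^2}$ (with $p_i\le 1$ handling the $\tfrac14$ term) together give exactly the stated bound. The paper itself imports this lemma from \cite{ADL2019} without reproducing a proof, and your argument is essentially the standard one used there, so there is nothing to flag.
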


We will use the following lemma:
\begin{lemma}\label{adl_of_ball}
The class $Q=\left\{ q \in \reals ^d |\; \|q\|_2 \leq M \right\}$ has an approximate description length $ \leq C\cdot M^2\log(dM)$ for some constant $C>0$
\end{lemma}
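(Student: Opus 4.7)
The plan is to build the compressor as a random $k$-sketch, using the sketching primitive recalled just before the lemma. Since $Q$ is viewed as a class of constant functions on a single point, the distribution $\DD$ in the compressor definition plays no role, and it suffices to construct, for every $\bw\in Q$, a random vector $\hat\bw$ with a short binary description in expectation that satisfies $\E\hat\bw=\bw$ and $\E\inner{\bu,\hat\bw-\bw}^2\le 1$ for every $\bu\in\sphere^{d-1}$.

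First I would average $k$ independent random sketches of $\bw$, defining $\hat\bw_k=\frac{1}{k}\sum_{j=1}^k\hat\bw^{(j)}$. By linearity and independence together with Lemma \ref{lem:sketch},
\[
\E\hat\bw_k=\bw,\qquad \E\inner{\bu,\hat\bw_k-\bw}^2\le \frac{1}{k}\left(\frac14+2\|\bw\|^2\right)\le \frac{1/4+2M^2}{k}.
\]
Choosing $k=\lceil 1/4+2M^2\rceil=O(M^2)$ makes the right-hand side at most $1$, so $\hat\bw_k$ is a $1$-estimator for $\bw$. This handles the statistical side of the compressor definition.

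Next I would bound the description length of a single sketch $\hat\bw^{(j)}=(\lfloor w_i/p_i\rfloor+b)\be_i$. The coordinate $i\in[d]$ costs $\lceil\log d\rceil$ bits. For the integer coefficient, the key point is that the additive term $\frac{1}{2d}$ in the sketching probability guarantees $p_i\ge \frac{1}{2d}$, hence $|w_i/p_i|\le 2d\|\bw\|\le 2dM$; so $\lfloor w_i/p_i\rfloor+b$ is a signed integer of magnitude at most $2dM+1$ and can be written with $O(\log(dM))$ bits. Concatenating $k$ such records with a fixed-width encoding yields a deterministic length of $O(k\log(dM))=O(M^2\log(dM))$ bits, giving a $(1,O(M^2\log(dM)))$-compressor for $Q$ and therefore the claimed bound on $ADL(Q)$.

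The only real obstacle is the bookkeeping around the encoding. Concretely, one must be careful that (i) the lower bound $p_i\ge\frac{1}{2d}$ is indeed used to keep $\lfloor w_i/p_i\rfloor$ in a finite, small range, and (ii) the encoding of $k$ concatenated sketches uses fixed-width fields so that the decoder can unambiguously recover each $(i,\lfloor w_i/p_i\rfloor+b)$ pair; otherwise there is nothing delicate, and the statement reduces cleanly to Lemma \ref{lem:sketch} plus a variance-averaging argument.
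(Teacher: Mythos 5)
Your proof is correct and follows essentially the same route as the paper: both use the random-sketch primitive of Lemma \ref{lem:sketch}, bound a single sketch's description by $O(\log(dM))$ bits via the lower bound $p_i\ge\frac{1}{2d}$ on the sketching probabilities, and then average $O(M^2)$ independent sketches to bring the variance down to $1$ (you do this averaging explicitly; the paper invokes Lemma \ref{var_lemma}, which is the same computation). Your magnitude bound $2dM+1$ is slightly tighter than the paper's $5dM$, but the conclusion is identical.
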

\begin{proof}
Consider the compressor that uses the random sketch of a vector for every $q\in Q$. For every $q\in Q$, denote by $u_{q_1}, \ldots, u_{q_d}$ the $d$ axis-aligned vectors that are used to sketch it by \ref{lem:sketch}. Let us upper bound the cardinality of $U=\bigcup_{q\in Q}\{u_{q_1}, \ldots,u_{q_d}\}$. Indeed, in every dimension we have that the outermost points have a norm of $\floor{\frac{4dq_i\|q\|^2}{2dq_i^2+2\|q\|^2}}+1\leq 5dM$ and so overall taking all axes and both positive and negative directions of every axis we have that $|U|\leq 2\cdot d \cdot 5 d M$ and so together with lemma \ref{var_lemma}, our compressor uses at most $C\cdot M^2\log(dM)$ bits for some $C>0$.
\end{proof}

We are now ready to prove the theorem. The general idea behind the proof is to use the assumed upper bound on the covering number by the ADL, i.e. \[\log\left(\cover{{L}}{\HH}{m}{\varepsilon}\right)=d/\varepsilon^{2}\] 
on a large \emph{euclidean} ball. This is to lower bound the volume of one the \emph{L-normed} balls that is a part of the cover. We then find a set of long orthogonal vectors inside the lower bounded \emph{L-normed} ball, by assuming they don't exist and getting a contradiction, and specifically that the \emph{L-normed} ball is contained within an euclidean ball of smaller volume than the previously calculated lower bound. We then use these vectors as a lower dimensional cube of high radius, and proceed similarly to the proof of theorem \ref{CoverNotADL}
\begin{proof}

Let $\norm{\cdot}_L$ be a norm\todo{sequnce of norms} such that for any set\todo{explain that this is stronger} $\HH\subset\reals^n$ with it holds that
\[
\log\left(\NN_L(\HH)\right)\le ADL(\HH)/\varepsilon^{2}
\]
Let $B_{\sqrt{k}/log(k),k}$ be the Euclidean ball of radius $\sqrt{k}/log(k)$ centered at $0$ in $\reals^k$.
By lemma \ref{adl_of_ball} we have that 
\[
ADL(B_{\sqrt{k}/log(k),k}) \leq \frac{C \cdot k\cdot \log(k\cdot k^{1/2}/\log(k))}{\log^2(k)}\leq C'\frac{k}{\log(k)}
\] 
for some universal constant $C'>0$, and so for an appropriate universa constant $\varepsilon$ we have that
\[
\log\left(
\NN_L(B_{\sqrt{k}/log(k),k},\epsilon)\right)\leq k
\]
Let $\{B_{\epsilon,k}^L(x_1),\ldots,B_{\epsilon,k}^L(x_{2^{k}})\}$ a cover of $B_{\sqrt{k}/log(k),k}$, where $B_{\epsilon,k}^L(x)$ is the $L$-normed ball of radius $\epsilon$ and $k$ dimensions, centered at $x$. Let
\[
i'=\arg \max_{i\in [2^k]} Vol(B_{\epsilon,k}^L(x_{i})\cap B_{\sqrt{k}/log(k),k})
\]
and denote $\tilde{B}=B_{\epsilon,k}^L(x_{i'})\cap B_{\sqrt{k}/log(k),k}$. Note that
\[
Vol(\tilde{B}) \geq 2^{-k} Vol (B_{\sqrt{k}/log(k),k})
\]
By the formula for the volume of the $n$-ball and Stirling's approximation we have that
\[
Vol(B_{r,d})\sim \frac{1}{\sqrt{\pi d}}\left(\frac{2\pi e}{d}\right)^{\frac{d}{2}}r^d
\]
Hence, 
$Vol(B_{\sqrt{k}/log(k),k})\sim\left(\frac{1}{\sqrt{k\pi}}\left(2\pi e\right)^{k/2}log^{-k}(k)\right)$. It follows that
\[
Vol(\tilde{B}) \geq 2^{-k}  Vol (B_{\sqrt{k}/log(k),k}) \geq \Omega \left(\frac{1}{\sqrt{k\pi}}\left(\frac{\pi}{log^2(k)}\right)^{k/2}\right)
\]
W.l.o.g and for the sake of notation simplicity, assume $x_{i'}=0$ i.e. $\tilde{B}=B_{\epsilon,k}^L(0)\cap B_{\sqrt{k}/log(k),k}$. This is only to simplify the expression for lengths of vectors inside that ball.


Let $u_1\in \tilde{B}$ be a vector with a maximal Euclidean norm. We have that $\|u_1\| \geq \frac{\left(\frac{k}{2\pi e}\right)^{1/2}}{\log(k)}$ as otherwise $\tilde{B} \subset B_{r,k}(0)$ with $Vol(B_{r,k}(0))\leq o \left(\frac{1}{\sqrt{k\pi}}\left(\frac{\pi}{log^2(k)}\right)^{k/2}\right)$ for $r<\left(\frac{k}{2\pi e}\right)^{1/2}log^{-1}(k)$.
Denote $\tilde{B}_1=\tilde{B}\cap (span\{u_1\}^\perp)$. Note that $Vol(\tilde{B}_1)\geq \Omega \left(\frac{1}{2\pi k}\left(\frac{\pi}{log^2(k)}\right)^{k/2}\right)$ as $\tilde{B_1}\subset B_{\sqrt{k},k}$, so the volume of the projection to the orthogonal $k-1$ dimensions can shrink by up to a factor of $2\sqrt{k}$. Now let $u_2\in \tilde{B}_1$ be a vector of maximal norm, then 
\[
\|u_2\|\geq\frac{1}{2\pi k}^{\frac{1}{k-1}}\left(\frac{k-1}{2\pi e}\right)^{1/2}log^{-k/(k-1)}(k)
\]
As otherwise $\tilde{B}_1$ will be contained in a ball of a smaller volume than its own; namely, we want 
\[
\frac{1}{2\pi k}\left(\frac{\pi}{log^2(k)}\right)^{k/2} \leq Vol(\tilde{B}_1) \leq \frac{1}{\sqrt{\pi (k-1)}}\left(\frac{2\pi e}{k-1}\right)^{\frac{k-1}{2}}\|u_2\|^{k-1}
\]
Overall by denoting $\tilde{B}_i=\tilde{B}\cap (span\{u_1,\ldots, u_i\}^\perp)$ and using the same argument we get
\[\|u_{i+1}\|\geq\left(8k\right)^{\frac{-i-1}{2k-2i}}\left(\frac{k-i}{2\pi e}\right)^{1/2}log^{-k/(k-i)}(k)\]
Repeating the process $k^{\frac{1}{4}}$ times and for large enough $k$, we get that $$\|u_1\|,\ldots,\|u_{k^{\frac{1}{4}}}\|\geq \left(\frac{k}{16\pi e}\right)^{1/2}log^{-2}(k)$$
Now notice that $conv(\pm u_1,\ldots,\pm u_{k^{\frac{1}{4}}}) \subset B_{\epsilon,k}^L(0)$ and therefore a $k^{\frac{1}{4}}$-dimensional cube of radius $C_0\cdot k^{\frac{1}{4}}\cdot log^{-2}(k)$ for some constant $C_0$, denote this cube by $L$. By choosing $d=k^{\frac{1}{4}}$, taking the set $S=\{\pm1\}^d$, and $A$ to be the orthogonal matrix that aligns $S$ with the corners of $L$, and denote $$\HH=\left\{A\cdot s | s\in S\right\}\subset \reals^k$$. We have that
$$log\left(\NN_L(\HH,\varepsilon)\right)=\begin{cases}
  0 & \varepsilon \geq C_1\frac{log^2(k)}{k^{\frac{1}{4}}}\\
d & \varepsilon <
C_1\frac{log^2(k)}{k^{\frac{1}{4}}}
\end{cases}$$
For some constant $C_1$. And thus overall $$log\left(\NN_L(\HH,\varepsilon)\right)\leq C_1^{2}\frac{dlog^4(k)}{k^{1/2}\varepsilon^{2}}= \frac{C_1^{2}log^4(k)}{k^{\frac{1}{4}}\varepsilon^{2}}$$ Which can be as small as we want by choosing large enough $k$, while on the other hand we have that $ADL(\HH)=d=k^{1/4}$.
\end{proof}

\section*{Acknowledgement}
The research described in this paper was funded by the European Research Council (ERC) under the European Union’s Horizon 2022 research and innovation program (grant agreement No. 101041711), the Israel Science Foundation (grant number 2258/19), and the Simons Foundation (as part of the Collaboration on the Mathematical and Scientific Foundations of Deep Learning).

\bibliography{bib}
\end{document}